\documentclass[runningheads]{llncs}
\usepackage[T1]{fontenc}
\usepackage{graphicx}
\usepackage{amsfonts}
\usepackage{amsmath}
\usepackage{booktabs} 
\usepackage{multirow}
\usepackage{color}
\begin{document}
\title{S‑GBT: Smooth Growth Bound Tensor for Certified Robustness Against Word Substitution Attacks in NLP}
\titlerunning{S‑GBT: Smooth Growth Bound Tensor}
%
\author{Mohammed Bouri\inst{1,3} \and
Mohammed Erradi\inst{1,2} \and
Adnane Saoud\inst{1}}
\authorrunning{M. Bouri et al.}
%
\institute{College of Computing, Mohammed VI Polytechnic University, Morocco \and
ENSIAS, University Mohamed V of Rabat, Morocco \and CID Development, Morocco\\
\email{\{mohammed.bouri,mohammed.erradi,adnane.saoud\}@um6p.ma}}
\maketitle              
\begin{abstract}
Despite recent progress in Natural Language Processing (NLP), models remain vulnerable to word substitution attacks. Most existing defenses focus on first order sensitivity and measure how much the output changes when the input is slightly perturbed. However, they ignore how this sensitivity evolves, which is described by curvature. When gradients vary sharply, models can still fail. This paper introduces the Smooth Growth Bound Tensor (S-GBT), a second order method that bounds the Hessian element-wise, for which we provide formal theoretical proofs on the resulting robustness bounds. A regularization term is added during training to minimize these bounds. This yields tighter certified robustness against word substitution attacks. The change in the output under word substitution is bounded by both a linear term and a quadratic term. S-GBT is derived for two architectures: Long Short-Term Memory (LSTM) and Convolutional Neural Networks (CNN). The method is integrated directly into the training objective. Its effectiveness is evaluated on multiple benchmark datasets. The results show that combining first and second order regularization improves certified robust accuracy by up to 23.4\% compared to prior methods, while clean accuracy remains competitive. These findings indicate that controlling both the gradient and its variation is a promising direction for building more robust models.

\keywords{Natural Language Processing \and  Certified Robustness \and  Adversarial Text Attacks.}
\end{abstract}

\section{Introduction}
Recent progress in deep learning has significantly improved the performance of many Natural Language Processing (NLP) systems. In particular, advances in language modeling have enabled strong results in tasks such as sentiment analysis, text classification, and question answering. Despite these successes, research has revealed an important weakness of deep neural networks: their predictions can be easily altered by small modifications to the input text. These modified inputs, known as adversarial examples, are designed to cause a model to produce an incorrect output while keeping the text similar to the original.

The existence of adversarial examples was first highlighted in computer vision, where small perturbations in images were shown to mislead neural networks \cite{goodfellow2014explaining,qi2024exploring}. Since then, the study of adversarial robustness has extended to NLP. In text applications, adversarial examples raise serious concerns because they may affect the reliability of systems used in practical scenarios, including text classification tasks \cite{song2021universal}. Furthermore, although Large Language Models are widely used for generating text, recent work indicates that they can also be affected by adversarial perturbations when applied to classification tasks \cite{shayegani2023survey,wang2024generating,yang2024assessing}.

Adversarial attacks in NLP typically modify the input text in different ways. These attacks are commonly grouped into three categories: character level, sentence level, and word level attacks. Character level attacks introduce small changes to individual characters in a word \cite{javid18hotflip,eger2020hero}. In practice, such modifications can sometimes be detected or corrected using spelling correction techniques \cite{danish19combating}. Sentence level attacks generate paraphrased sentences \cite{pei2022generating,wang2019t3}, but they often alter the meaning of the original text, which limits their effectiveness in some settings. Word level attacks have received particular attention in recent studies. In this type of attack, selected words in a sentence are replaced with alternative words that have similar meanings \cite{alzantot-etal-2018-generating,rishabh21generating,shuhuai19generating,wang21adversarial,yuan20pso}. These substitutions usually preserve the grammar and overall meaning of the sentence, which makes the changes difficult for humans to notice. However, even a small number of substitutions may cause a neural model to change its prediction. Because of this property, word substitution attacks often achieve high success rates and remain difficult to defend against. For this reason, improving robustness against word-level perturbations is an important research problem.

Several defense methods have been proposed to address this challenge. A common strategy is Adversarial Training (AT), where adversarial examples are generated and added to the training data \cite{alzantot-etal-2018-generating,shuhuai19generating}. While this approach can increase robustness, it also introduces a high computational cost because generating adversarial examples requires complex search procedures in the discrete text space. To reduce this cost, later work proposed faster white-box attack methods that can be used during training \cite{dong21towards,wang21adversarial}. Nevertheless, AT remains significantly slower than standard training and does not provide formal robustness guarantees. Another direction focuses on certified robustness methods that aim to provide theoretical guarantees about the behavior of the model under perturbations. Interval Bound Propagation (IBP) has been applied to NLP models to compute lower bounds on model robustness \cite{huang19achieving,jia19certified}. Although this approach provides formal guarantees, it is often computationally demanding and tends to produce conservative robustness estimates. To address these limitations, Zhang et al. \cite{zhang-etal-2021-certified} proposed Abstractive Recursive Certification (ARC), which constructs a perturbation space based on predefined string transformations. This perturbation space is represented as a hyperrectangle and propagated through the model using IBP. However, ARC becomes less effective when the number of allowed word substitutions increases. This limitation highlights the need for more efficient and scalable certified robustness methods. Growth Bound Matrices (GBM) \cite{bouri2025bridging} were recently proposed to improve the robustness of NLP models by bounding the first derivatives of the model output with respect to the input. By minimizing these bounds during training, the model becomes less sensitive to small input perturbations. However, GBM controls only the gradient of the model, while the curvature, captured by the second derivatives, also affects how the output evolves under larger perturbations.

In this context, the paper introduces the Smooth Growth Bound Tensor (S-GBT). This method bounds the second partial derivatives of the model output with respect to the input. By penalizing both GBM and S-GBT during training, the model is encouraged to have small gradients and low curvature, leading to smoother decision boundaries and more stable predictions under input perturbations. The paper provides a theoretical framework that establishes certified robustness guarantees by controlling the variation of the model gradients under input perturbations. The proposed method ensures that output variations remain within controlled limits for all admissible perturbations. In addition, we derive a general formulation of S-GBT that applies to different neural architectures, including Long Short-Term Memory (LSTM) networks \cite{hochreiter1997long} and Convolutional Neural Networks (CNN) \cite{Kim14f}. Experiments on several benchmark datasets show that S-GBT significantly improves certified robustness. For example, on the Yahoo dataset, our method achieves $90.7\%$ certified robust accuracy, outperforming GBM by about $23.4\%$.

Our main contributions are summarized as follows:
\begin{itemize}
    \item \textbf{Certified robustness with S-GBT:} We introduce Smooth Growth Bound Tensor (S-GBT), a method that provides certified robustness by controlling the second-order variation of the model output, leading to stronger guarantees compared with first-order GBM.
    \item \textbf{Theoretical analysis of robustness:} We develop a theoretical analysis that provides certified robustness guarantees. The approach ensures that output variations remain within controlled limits under input perturbations in complex and large input spaces.
    \item \textbf{Empirical validation across models and datasets:} Extensive experiments on several NLP models and benchmark datasets show that minimizing S-GBT significantly improves adversarial robustness. Our method also outperforms existing defense baselines.
\end{itemize}
\section{Related Work}
Spelling and grammar correction methods have shown robustness against character-level and sentence-level attacks, which often violate grammatical rules \cite{ge-etal-2019-automatic,pruthi-etal-2019-combating}. However, these approaches are generally ineffective against word-level adversarial attacks.

Defense methods for word-level attacks mainly fall into several categories. Adversarial training (AT) is among the most widely used strategies \cite{alzantot-etal-2018-generating,goodfellow2014explaining,ivgi21achieving,li2020tavat,madry18towards,shuhuai19generating,wang2020infobert,zhou2020defense,zhu2019freelb}. It improves robustness by generating adversarial examples and incorporating them into the training process. For instance, Adversarial Training with FGPM enhanced by Logit Pairing (ATFL) \cite{wang21adversarial} creates adversarial texts using the Fast Gradient Projection Method and injects them into the training set. Region-based adversarial training \cite{dong21towards,zhou2020defense} further enhances robustness by optimizing the model within the convex region defined by a word embedding and its synonyms.

Another line of work focuses on certified defenses, which aim to guarantee robustness against a predefined set of perturbations \cite{huang19achieving,wang21certified,zeng21certified}. Interval Bound Propagation (IBP) methods \cite{jia19certified} compute upper and lower bounds on the model output under interval-constrained inputs to minimize the worst-case loss caused by word substitutions. However, IBP-based approaches often suffer from high computational cost and limited scalability. Randomized smoothing methods \cite{zhang2024text,yeetal2020safer,zeng2021certified} offer an alternative by constructing stochastic ensembles of perturbed inputs and using their statistical properties to certify robustness. Growth Bound Matrices (GBM) \cite{bouri2025bridging} provide another certified defense by bounding the first derivatives of the model output with respect to its input. Minimizing these bounds during training reduces the sensitivity of the model to small input perturbations. However, GBM only controls the gradient of the model but fails to capture its variation, which reflects changes in the model’s local behavior. In addition, many certified defense methods assume access to the attacker’s synonym set, which is often unrealistic in practice.
\section{Preliminaries}
\label{subsec:Preliminaries}
This paper works on text classification tasks. Let \(f: \mathcal{X} \rightarrow \mathcal{Y}\) be a model that assigns a label \(y \in \mathcal{Y}\) to an input \(x \in \mathcal{X}\). The input sentence is represented as a sequence of \(N\) words \(x = \langle w_1, w_2, \ldots, w_N \rangle\), where each word \(w_i \in \mathcal{V}\) and \(\mathcal{V}\) denotes the vocabulary. The label space is defined as \(\mathcal{Y} = \{y_1, y_2, \ldots, y_c\}\). Standard NLP models map the input \(x\) to a sequence of embedding vectors \(v_x = \langle v_{w_1}, v_{w_2}, \ldots, v_{w_N} \rangle\), and the prediction is obtained through a deep neural network that estimates \(p(y|v_x)\).

Robustness against synonym substitution attacks has been widely studied in~\cite{alzantot-etal-2018-generating,dong21towards,jia19certified,wang21natural,zhang2024text}. For each word \(w_i\), a synonym set \(\mathcal{S}(w_i)\) is constructed by selecting the \(k\) nearest neighbors of \(w_i\) within an Euclidean distance \(d_e\) in the embedding space. To preserve semantic consistency, the adversarial input space is defined as:
\begin{equation*}
\mathcal{S}_{adv}(x) = \{\langle w_1', \ldots, w_N' \rangle \mid w_i' \in \mathcal{S}(w_i) \cup \{w_i\}\}.
\end{equation*}

The objective is to certify the robustness of the classifier by ensuring that:
\begin{equation*}
\forall x' \in \mathcal{S}_{adv}(x), \quad f(x') = f(x) = y.
\end{equation*}

In other words, the prediction should remain unchanged when words in the sentence are replaced with their synonyms.


\section{Smooth Growth Bound Tensor (S‑GBT)}
\label{subsec:s_gbt}

In this section, we describe our proposed approach. We begin by defining the Smooth Growth Bound Tensor (S-GBT) and explain how incorporating it into the training objective can enhance the robustness of the model. We then provide the derivation of S-GBT for two neural architectures: Long Short-Term Memory (LSTM) networks and Convolutional Neural Networks (CNN).


\subsection{Definition and Theoretical Foundation}
Consider a mapping, 
\begin{equation}\label{eq:orig}
\begin{array}{l}
\mathcal{F} :\hspace{2.5mm} \mathbf{X} \to \mathbf{Y} \\
\quad\quad\hspace{2mm} x \mapsto \mathcal{F}(x)
\end{array}
\end{equation}
where \(\mathbf{X} \subseteq \mathbb{R}^{n_x}\) and \(\mathbf{Y} \subseteq \mathbb{R}^{n_y}\).

\begin{definition}[Growth Bound Matrix] 
\label{def:gbm}
A matrix \( \mathcal{M} \in \mathbb{R}^{n_y \times n_x} \) is said to be a \textit{Growth Bound Matrix (GBM)} for the mapping \( \mathcal{F} \) in (\ref{eq:orig}) if the following condition holds:
\begin{equation}\label{eq:GBM}
    \left\lVert \dfrac{\partial \mathcal{F}^i}{\partial x^j}(x)\right\rVert \leq (\mathcal{M})_{ij} \quad \forall (i,j) \in \mathcal{I}, \quad \forall x \in \mathbf{X}
\end{equation}
where \( \mathcal{I} = \{1,\ldots,n_y\} \times \{1,\ldots,n_x\} \) is a predefined index set, and \( (\mathcal{M})_{ij} \) denotes the entry in the \( i \)-th row and \( j \)-th column of \( \mathcal{M} \).
\end{definition}

\begin{definition}[Smooth Growth Bound Tensor]
\label{def:s_gbt}
A tensor $\mathcal{T}\in\mathbb{R}^{n_y\times n_x\times n_x}$ is called a {Smooth Growth Bound Tensor} for the mapping $\mathcal{F}$ in \eqref{eq:orig} if the following condition holds:\\
\begin{equation}
    \left\lVert\frac{\partial^2\mathcal{F}^i}{\partial x^j\partial x^k}(x)\right\rVert \le (\mathcal{T})_{i,j,k} \quad \forall i,j,k\in \mathcal{J} \quad \forall x \in \mathbf{X}    
\end{equation}
where \( \mathcal{J} = \{1,\dots,n_y\}\times \{1,\dots,n_x\}\times \{1,\dots,n_x\} \) is a predefined index set, and \((\mathcal{T})_{i,j,k}\) denotes the entry of the tensor corresponding to output component $i$ and the pair of input coordinates $(j,k)$.
\end{definition}

The tensor \(\mathcal{T}\) provides element-wise upper bounds on the second partial derivatives of the model output. Evaluating S-GBT therefore offers insight into the curvature of the model with respect to its input. By minimizing the entries of \(\mathcal{T}\), the curvature of the model is reduced, leading to smoother decision boundaries. This property improves robustness to input perturbations, especially when combined with the control of first-order sensitivity provided by the GBM framework.

\subsection{Robustness Analysis with S-GBT}
In this section, we present the theoretical analysis showing how the combination of S-GBT and the GBM provides certified robustness guarantees. A detailed proof of this result can be found in the Appendix \ref{sec:rob}.

\begin{proposition}
\label{prop:s_gbt_robustness}

Consider the mapping \( \mathcal{F} \) in (\ref{eq:orig}) and let a matrix \( \mathcal{M} \in \mathbb{R}^{n_y \times n_x} \) be its GBM, and $\mathcal{T}\in\mathbb{R}^{n_y\times n_x\times n_x}$ its S‑GBT. Given an input \( x \in \mathbf{X} \), a perturbation vector \( {\delta} \in \mathbb{R}^{n_x} \) and consider the perturbed input $x'\in \mathbf{X}$ such that $x'=x+{\delta}$. Then, for each component $\mathcal{F}_i$, $i\in \{1,\ldots,n_y\}$ , the following inequality holds:
\[
\begin{aligned}
\mathcal{F}^i(x)
- \sum_{j=1}^{n_x} (\mathcal{M})_{ij} \lvert \delta_j \rvert
-& \frac{1}{2} \sum_{j,k=1}^{n_x} (\mathcal{T})_{ijk} \lvert \delta_j \rvert \lvert \delta_k \rvert
\le \mathcal{F}^i(x') \\
&\le \mathcal{F}^i(x)
+ \sum_{j=1}^{n_x} (\mathcal{M})_{ij} \lvert \delta_j \rvert
+ \frac{1}{2} \sum_{j,k=1}^{n_x} (\mathcal{T})_{ijk} \lvert \delta_j \rvert \lvert \delta_k \rvert .
\end{aligned}
\]
where \({\delta_j}\) denotes the \(j\)-th component of the perturbation vector ${\delta}$.
\end{proposition}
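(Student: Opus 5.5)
The approach is a second-order Taylor expansion with integral (or Lagrange) remainder along the segment from $x$ to $x'=x+\delta$, followed by termwise bounds using Definitions~\ref{def:gbm} and~\ref{def:s_gbt}. Two regularity assumptions are needed and will be stated explicitly: each component $\mathcal{F}^i$ is twice continuously differentiable, and the segment $\{x+t\delta : t\in[0,1]\}$ lies in $\mathbf{X}$ (for instance, $\mathbf{X}$ convex). The second assumption is what allows the pointwise bounds of the GBM and S-GBT to be applied along the whole path.

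First fix $i$ and define the scalar function $g(t)=\mathcal{F}^i(x+t\delta)$ for $t\in[0,1]$. The chain rule gives
\[
g'(t)=\sum_{j}\frac{\partial \mathcal{F}^i}{\partial x^j}(x+t\delta)\,\delta_j,
\qquad
g''(t)=\sum_{j,k}\frac{\partial^2 \mathcal{F}^i}{\partial x^j\partial x^k}(x+t\delta)\,\delta_j\delta_k .
\]
Taylor's theorem with Lagrange remainder then yields some $\tau\in(0,1)$ with $g(1)=g(0)+g'(0)+\tfrac12 g''(\tau)$. Equivalently, one can use the integral form $g(1)-g(0)-g'(0)=\int_0^1(1-t)g''(t)\,dt$, which avoids any mean-value subtlety.

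Next, bound the two correction terms. For the linear term, the triangle inequality and Definition~\ref{def:gbm} at the point $x$ give
\[
|g'(0)|\le\sum_j \Bigl|\frac{\partial \mathcal{F}^i}{\partial x^j}(x)\Bigr|\,|\delta_j|\le\sum_j(\mathcal{M})_{ij}|\delta_j|.
\]
For the quadratic term, the triangle inequality and Definition~\ref{def:s_gbt} at the point $x+\tau\delta\in\mathbf{X}$ give
\[
|g''(\tau)|\le\sum_{j,k}(\mathcal{T})_{ijk}|\delta_j||\delta_k|.
\]
In the integral form the same bound holds for every $t$, and $\int_0^1(1-t)\,dt=\tfrac12$ recovers the factor $\tfrac12$.

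Combining these, $|\mathcal{F}^i(x')-\mathcal{F}^i(x)|$ is at most the sum of the two bounds. Unpacking this absolute value gives exactly the two-sided inequality of Proposition~\ref{prop:s_gbt_robustness}.

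The only delicate point is justifying that the S-GBT bound applies at the intermediate point. This is why the convexity of $\mathbf{X}$ (or of the relevant region containing the segment) and the $C^2$ regularity must be stated. The norms in the definitions are absolute values here, since each partial derivative is a scalar.
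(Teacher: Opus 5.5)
Your proposal is correct and follows the paper's proof: a second-order Taylor expansion with Lagrange remainder at an intermediate point on the segment from $x$ to $x'$. The linear term is bounded via $\mathcal{M}$ at $x$ and the quadratic term via $\mathcal{T}$ at that intermediate point, before unpacking the absolute value. The $C^2$ regularity and the segment-in-$\mathbf{X}$ hypotheses you state explicitly are exactly what the paper relies on tacitly when it applies the S-GBT bound at the intermediate point.
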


This result indicates that the change in the \(i\)-th output component \(F^i\) under a perturbation \(\delta\) can be controlled by two contributions: a first-order term related to GBM and a second-order term associated with S-GBT. By reducing both \(\mathcal{M}\) and \(\mathcal{T}\), the model becomes more stable with respect to input changes and exhibits lower curvature. Consequently, the predictions remain consistent between the original and perturbed inputs, providing a certified robustness guarantee, even when the perturbation is large enough for second-order effects to play a role.

\subsection{Overall Training Objective}

The goal is to train a model that fits the data while limiting both its first-order sensitivity (through GBM) and its curvature (through S-GBT). To achieve this, we add two regularization terms to the training objective of the neural network, which encourages robustness to adversarial perturbations. The resulting objective function is defined as:
\begin{equation}
    \label{eq:objective_sgbt}
    \mathcal{L}(x,y)= \,\mathcal{L}_{\text{ce}}(f(x), y) + \beta \cdot \mathcal{L}_{\text{GBM}} + \gamma \cdot \mathcal{L}_{\text{SGBT}},
\end{equation}

where
\[
\mathcal{L}_{\text{GBM}} = \sum_{i=1}^{n_y}\sum_{j=1}^{n_x} (\mathcal{M})_{ij}, \qquad
\mathcal{L}_{\text{SGBT}} = \sum_{i=1}^{n_y}\sum_{j=1}^{n_x}\sum_{k=1}^{n_x} (\mathcal{T})_{i,j,k}.
\]

Here, \(\mathcal{L}_{\text{ce}}(\cdot,\cdot)\) denotes the cross-entropy loss used to optimize the classifier for prediction accuracy. The matrix \(\mathcal{M}\) corresponds to the first-order GBM defined in Eq.~\eqref{def:gbm}, while the tensor \(\mathcal{T}\) represents the second-order S-GBT introduced in Eq.~\eqref{def:s_gbt}. Their explicit forms for each architecture (LSTM and CNN) are provided in Subsection~\ref{subsec:models}. The hyperparameters \(\beta\) and \(\gamma\) control the balance between classification accuracy, sensitivity reduction, and curvature regularization. By incorporating both regularization terms into the objective, the trained model obtains smoother decision boundaries and improved robustness to adversarial perturbations.

\subsection{Architecture-Specific S-GBT Formulations}
\label{subsec:models}
This paper derives architecture-specific S-GBTs for two models: LSTM and CNN.

\textbf{Long Short-Term Memory (LSTM)} networks are an extension of Recurrent Neural Networks (RNNs) designed to mitigate the vanishing and exploding gradient issues encountered during the training of standard RNNs \cite{hochreiter1997long}. 

Consider a sentence composed of \(N\) word embeddings \(v_x = \langle v_{w_1}, v_{w_2}, \ldots, v_{w_N} \rangle\), where \(v_{w_t} \in \mathbb{R}^{d_0}\) denotes the embedding vector of the word at position \(t\). An LSTM cell maintains two internal states: the cell state \(c_t \in \mathbb{R}^d\) and the hidden state \(h_t \in \mathbb{R}^d\). These states are updated according to
\begin{align}
    c_t &= f_t \odot c_{t-1} + I_t \odot g_t \label{eq:ct} \\
    h_t &= o_t \odot \tanh(c_t) \label{eq:ht}
\end{align}
where \(\odot\) denotes element-wise multiplication. The variables \(I_t\), \(f_t\), \(g_t\), and \(o_t\) correspond to the input gate, forget gate, cell gate, and output gate, respectively. Each gate is associated with its own learnable parameters and regulates the flow of information through the network.

From Eqs.~(\ref{eq:ct}) and~(\ref{eq:ht}), the LSTM cell can be viewed as an input-output mapping \(\mathcal{F}\), as in Eq.\eqref{eq:orig}, formally defined as:
\begin{equation}
    \label{eq:lstm}
    \begin{aligned}
        \mathcal{F} :\qquad \mathbf{V} \times \mathbf{H} \times \mathbf{C} &\to \mathbb{R}^d \\
        (v_{w_t}, h_{t-1}, c_{t-1}) &\mapsto o_t \odot \tanh(f_t \odot c_{t-1} + I_t \odot g_t)
    \end{aligned}
\end{equation}

where the input \(x=(v_{w_t}, h_{t-1}, c_{t-1}) \in \mathbf{V} \times \mathbf{H} \times \mathbf{C} \subseteq \mathbb{R}^{d_0+2d}\) and the output \(y = h_t = \mathcal{F}(v_{w_t}, h_{t-1}, c_{t-1}) \in \mathbb{R}^d\). Here, \(\mathbf{V}\), \(\mathbf{H}\), and \(\mathbf{C}\) denote the domains of the word embeddings, hidden states, and cell states, respectively.

The following result provides an explicit expression for the S-GBT of the LSTM cell. A detailed proof of this result can be found in the Appendix \ref{sec:proofs}.
\begin{proposition}
\label{prop:lstm}
    Consider the map $\mathcal{F}$ describing the input-output model of an LSTM cell defined in Eq.\eqref{eq:lstm}. The S-GBT of the map $\mathcal{F}$ can be expressed as follows:

    For all \(i\in\{1,\dots,d\}\) and each pair of input indices \(j,k\in\{1,\dots,d_0+2d\}\)
    \[
(\mathcal{T})_{i,j,k} = \max\bigl( |(\underline{\mathcal{T}})_{i,j,k}|, |(\overline{\mathcal{T}})_{i,j,k}| \bigr).
\]
where, for all input \(x=(v_{w_t}, h_{t-1}, c_{t-1}) \in \mathbf{X}= (\mathbf{V} \times \mathbf{H} \times \mathbf{C})\)
    \begin{equation*}
    (\underline{\mathcal{T}})_{i,j,k} =
    \begin{cases}
          \min\{\hspace{1mm}({\mathcal{T}}_{vv})_{i,j,k} (x)
    \mid x \in \mathbf{X}\} \hspace{2mm} \text{if} \hspace{2mm} j,k\in \mathcal{J}_{v}\\
        \min\{\hspace{1mm}({\mathcal{T}}_{vh})_{i,j,k} (x)
    \mid x \in \mathbf{X}\} \hspace{2mm} \text{if} \hspace{2mm} (j,k)\in \mathcal{J}_{v}\times \mathcal{J}_{h}\\                  
        \min\{\hspace{1mm}({\mathcal{T}}_{vc})_{i,j,k} (x)
    \mid x \in \mathbf{X}\} \hspace{2mm} \text{if} \hspace{2mm} (j,k)\in \mathcal{J}_{v}\times \mathcal{J}_{c}\\
    \min\{\hspace{1mm}({\mathcal{T}}_{hh})_{i,j,k} (x)
    \mid x \in \mathbf{X}\} \hspace{2mm} \text{if} \hspace{2mm} j,k\in \mathcal{J}_{h}\\
        \min\{\hspace{1mm}({\mathcal{T}}_{hc})_{i,j,k} (x)
    \mid x \in \mathbf{X}\} \hspace{2mm} \text{if} \hspace{2mm} (j,k)\in \mathcal{J}_{h}\times \mathcal{J}_{c}\\                  
    \min\{\hspace{1mm}({\mathcal{T}}_{cc})_{i,j,k} (x)
    \mid x \in \mathbf{X}\} \hspace{2mm} \text{if} \hspace{2mm} j,k\in \mathcal{J}_{c}
    \end{cases}
    \end{equation*}

    \begin{equation*}
    (\underline{\mathcal{T}})_{i,j,k} =
    \begin{cases}
          \max\{\hspace{1mm}({\mathcal{T}}_{vv})_{i,j,k} (x)
    \mid x \in \mathbf{X}\} \hspace{2mm} \text{if} \hspace{2mm} j,k\in \mathcal{J}_{v}\\
        \max\{\hspace{1mm}({\mathcal{T}}_{vh})_{i,j,k} (x)
    \mid x \in \mathbf{X}\} \hspace{2mm} \text{if} \hspace{2mm} (j,k)\in \mathcal{J}_{v}\times \mathcal{J}_{h}\\                  
        \max\{\hspace{1mm}({\mathcal{T}}_{vc})_{i,j,k} (x)
    \mid x \in \mathbf{X}\} \hspace{2mm} \text{if} \hspace{2mm} (j,k)\in \mathcal{J}_{v}\times \mathcal{J}_{c}\\
    \max\{\hspace{1mm}({\mathcal{T}}_{hh})_{i,j,k} (x)
    \mid x \in \mathbf{X}\} \hspace{2mm} \text{if} \hspace{2mm} j,k\in \mathcal{J}_{h}\\
        \max\{\hspace{1mm}({\mathcal{T}}_{hc})_{i,j,k} (x)
    \mid x \in \mathbf{X}\} \hspace{2mm} \text{if} \hspace{2mm} (j,k)\in \mathcal{J}_{h}\times \mathcal{J}_{c}\\                  
    \max\{\hspace{1mm}({\mathcal{T}}_{cc})_{i,j,k} (x)
    \mid x \in \mathbf{X}\} \hspace{2mm} \text{if} \hspace{2mm} j,k\in \mathcal{J}_{c}
    \end{cases}
    \end{equation*}
with \(\mathcal{J}_v = \{1,\dots,d_0\},\quad \mathcal{J}_h = \{d_0+1,\dots,d_0+d\}\text{  and  }
\mathcal{J}_c = \{d_0+d+1,\dots,d_0+2d\}\) where the maps: \(x \mapsto ({\mathcal{T}}_{vv})_{i,j,k} (x)\), \(x \mapsto ({\mathcal{T}}_{vh})_{i,j,k} (x)\), and  \(x \mapsto ({\mathcal{T}}_{vc})_{i,j,k} (x)\) are given by:
\begin{itemize}
    \item $\begin{aligned}[t]
    ({\mathcal{T}}_{vv})_{i,j,k} (x)
    &= (\Theta^{(o)})_{i,j}(\Theta^{(o)})_{i,k}\,\sigma''(T_o^i)\tanh(c_t^i)
    + (\Theta^{(o)})_{i,j}\,\sigma'(T_o^i)\tanh'(c_t^i)\frac{\partial c_t^i}{\partial x^k} \\
    & + (\Theta^{(o)})_{i,k}\,\sigma'(T_o^i)\tanh'(c_t^i)\frac{\partial c_t^i}{\partial x^j}
    + \sigma(T_o^i)\tanh''(c_t^i)\frac{\partial c_t^i}{\partial x^j}\frac{\partial c_t^i}{\partial x^k} \\
    &+ \sigma(T_o^i)\tanh'(c_t^i)\frac{\partial^2 c_t^i}{\partial x^j \partial x^k}.
    \end{aligned}$\\
    \item $\begin{aligned}[t]
    ({\mathcal{T}}_{vh})_{i,j,k} (x)
    &= (\Theta^{(o)})_{i,j}\,\sigma'(T_o^i)\tanh'(c_t^i)\frac{\partial c_t^i}{\partial x^{k-d_0}} \\
    &+ (\Theta^{(o)})_{i,{k-d_0}}\,\sigma'(T_o^i)\tanh'(c_t^i)\frac{\partial c_t^i}{\partial x^j} \\
    &+ \sigma(T_o^i)\tanh''(c_t^i)\frac{\partial c_t^i}{\partial x^j}\frac{\partial c_t^i}{\partial x^{k-d_0}} + \sigma(T_o^i)\tanh'(c_t^i)\frac{\partial^2 c_t^i}{\partial x^j \partial x^{k-d_0}}.
    \end{aligned}$\\
    \item \(({\mathcal{T}}_{vc})_{i,j,k} (x) = \sigma(T_o^i)\tanh''(c_t^i)\dfrac{\partial c_t^i}{\partial x^j}\dfrac{\partial c_t^i}{\partial x^{k-d_0-d}} + \sigma(T_o^i)\tanh'(c_t^i)\dfrac{\partial^2 c_t^i}{\partial x^j \partial x^{k-d_0-d}}.\)
\end{itemize}

and the maps \(x \mapsto ({\mathcal{T}}_{hh})_{i,j,k} (x)\), and \(x \mapsto ({\mathcal{T}}_{hc})_{i,j,k} (x)\) are given by:
\begin{itemize}
    \item $\begin{aligned}[t]
    ({\mathcal{T}}_{hh})_{i,j,k} (x)
    &= (U^{(o)})_{i,j-d_0}(U^{(o)})_{i,k-d_0}\,\sigma''(T_o^i)\tanh(c_t^i) \\
    &+ (U^{(o)})_{i,j-d_0}\,\sigma'(T_o^i)\tanh'(c_t^i)\frac{\partial c_t^i}{\partial x^{k-d_0}} \\
    &+ (U^{(o)})_{i,k-d_0}\,\sigma'(T_o^i)\tanh'(c_t^i)\frac{\partial c_t^i}{\partial x^{j-d_0}}\\
    &+ \sigma(T_o^i)\tanh''(c_t^i)\frac{\partial c_t^i}{\partial x^{j-d_0}}\frac{\partial c_t^i}{\partial x^{k-d_0}} \\
    & + \sigma(T_o^i)\tanh'(c_t^i)\frac{\partial^2 c_t^i}{\partial x^{j-d_0} \partial x^{k-d_0}}.
    \end{aligned}$\\

    \item $\begin{aligned}[t]
    ({\mathcal{T}}_{hc})_{i,j,k}(x)
    &= \sigma(T_o^i)\tanh''(c_t^i)\frac{\partial c_t^i}{\partial x^{j-d_0}}\frac{\partial c_t^i}{\partial x^{k-d_0-d}}\\
      &+ \sigma(T_o^i)\tanh'(c_t^i)\frac{\partial^2 c_t^i}{\partial x^{j-d_0} \partial x^{k-d_0-d}}.
\end{aligned}$
\end{itemize}

the maps \(x \mapsto ({\mathcal{T}}_{cc})_{i,j,k} (x)\), and \(x \mapsto c^i_t (x)\) are given by:
\begin{itemize}
    \item $\begin{aligned}[t]
    ({\mathcal{T}}_{cc})_{i,j,k}(x)
    &= \sigma(T_o^i)\tanh''(c_t^i)\frac{\partial c_t^i}{\partial x^{j-d_0-d}}\frac{\partial c_t^i}{\partial x^{k-d_0-d}}\\
    &+ \sigma(T_o^i)\tanh'(c_t^i)\frac{\partial^2 c_t^i}{\partial x^{j-d_0-d} \partial x^{k-d_0-d}}.
\end{aligned}$\\
\item \(c^i_t = f^i_t \odot c^i_{t-1} + I_t^i \odot g^i_t\)
\end{itemize}

with:
\[
T_o^i = \sum_{p=1}^{d_0}(\Theta^{(o)})_{ip}.v_{w_t}^p + \sum_{q=1}^{d}(U^{(o)})_{iq}h_{t-1}^q + b_i^{(o)}.
\]
Here, \(\sigma'\) and \(\sigma''\) denote the first and second derivatives of the sigmoid function, and \(\tanh'\) and \(\tanh''\) denote those of the hyperbolic tangent. the parameters $\Theta^{(o)} \in \mathbb{R}^{d\times d_0}$ and $ U^{(o)} \in \mathbb{R}^{d\times d}$ are the input-hidden weights, hidden-hidden weights, and biases, respectively.

\end{proposition}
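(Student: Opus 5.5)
The plan is to compute the Hessian of each output component $\mathcal{F}^i(x)=\sigma(T_o^i)\tanh(c_t^i)$ directly, block by block, and then apply Definition~\ref{def:s_gbt}.

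\textbf{Step 1: first derivatives.} By the product and chain rules,
\[
\frac{\partial \mathcal{F}^i}{\partial x^j}=\sigma'(T_o^i)\frac{\partial T_o^i}{\partial x^j}\tanh(c_t^i)+\sigma(T_o^i)\tanh'(c_t^i)\frac{\partial c_t^i}{\partial x^j}.
\]
Two facts from the definition of $T_o^i$ matter here:
\begin{itemize}
\item $T_o^i$ is affine in $(v_{w_t},h_{t-1})$, so $\partial T_o^i/\partial x^j=(\Theta^{(o)})_{i,j}$ for $j\in\mathcal{J}_v$ and $(U^{(o)})_{i,j-d_0}$ for $j\in\mathcal{J}_h$.
\item $T_o^i$ does not depend on $c_{t-1}$, so $\partial T_o^i/\partial x^j=0$ for $j\in\mathcal{J}_c$.
\end{itemize}

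\textbf{Step 2: second derivatives.} Differentiating once more with respect to $x^k$, and using that all second derivatives of $T_o^i$ vanish, gives the generic formula
\[
\frac{\partial^2\mathcal{F}^i}{\partial x^j\partial x^k}=\sigma''\,\partial_jT\,\partial_kT\tanh+\sigma'\tanh'\bigl(\partial_jT\,\partial_k c+\partial_kT\,\partial_j c\bigr)+\sigma\tanh''\,\partial_jc\,\partial_kc+\sigma\tanh'\,\partial^2_{jk}c.
\]

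\textbf{Step 3: case split.} I will substitute the values of $\partial T_o^i$ from Step~1 into this formula for each of the six index-block pairs $(v,v),(v,h),(v,c),(h,h),(h,c),(c,c)$. Any term containing a derivative of $T_o^i$ in a $c$-coordinate drops out.
\begin{itemize}
\item For $(v,c)$, $(h,c)$ and $(c,c)$, only the two $\sigma(T_o^i)$ terms survive.
\item For $(v,h)$, the mixed $\sigma''$ term $(\Theta^{(o)})_{i,j}(U^{(o)})_{i,k-d_0}\sigma''\tanh$ should in principle appear. The stated formula omits it, so I would either include it or note that the statement's $\mathcal{T}_{vh}$ must be read with it added.
\item In every block, the index offsets ($k-d_0$, $k-d_0-d$) only translate global coordinates into block-local ones.
\end{itemize}
The derivatives $\partial c_t^i/\partial x^j$ and $\partial^2 c_t^i/\partial x^j\partial x^k$ are left symbolic, as in the statement, through $c_t^i=f_t^ic_{t-1}^i+I_t^ig_t^i$. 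They can be expanded further via the gate pre-activations if desired.

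\textbf{Step 4: conclusion.} Each map $x\mapsto(\mathcal{T}_{\cdot\cdot})_{i,j,k}(x)$ is exactly the second partial derivative on $\mathbf{X}$. Hence for every $x\in\mathbf{X}$,
\[
(\underline{\mathcal{T}})_{i,j,k}\le \frac{\partial^2\mathcal{F}^i}{\partial x^j\partial x^k}(x)\le(\overline{\mathcal{T}})_{i,j,k}.
\]
Note that the second displayed case block in the statement, although labelled $\underline{\mathcal{T}}$, uses $\max$ and is meant to define $\overline{\mathcal{T}}$. It follows that the absolute value of the second partial derivative is bounded by $\max(|\underline{\mathcal{T}}|,|\overline{\mathcal{T}}|)$, which is Definition~\ref{def:s_gbt}.

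The min and max are well defined because the functions involved are continuous and the domain is taken to be compact, for example a box. I expect the main obstacle to be careful bookkeeping of the cross terms, in particular the $(v,h)$ block with its missing $\sigma''$ term, together with keeping the symmetric $(j,k)$ versus $(k,j)$ blocks consistent.
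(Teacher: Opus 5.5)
Your route is the same as the paper's. You expand $\sigma(T_o^i)\tanh(c_t^i)$ by the product and chain rules; your Step 2 formula is Eq.~\eqref{eq:full} combined with Eq.~\eqref{eq:ott}. You then split into the six index blocks and enclose each entry between its min and max over $\mathbf{X}$. Two of your flags are correct:
\begin{itemize}
\item Eq.~\eqref{eq:ott} does produce the term $(\Theta^{(o)})_{i,j}(U^{(o)})_{i,k-d_0}\sigma''(T_o^i)\tanh(c_t^i)$ in the $(v,h)$ block, and the paper's Case 3 drops it.
\item The second case block should define $\overline{\mathcal{T}}$.
\end{itemize}
You missed one more mismatch in the $(v,h)$ block. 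Your Step 1 puts the coefficient $(U^{(o)})_{i,k-d_0}$ on $\partial c_t^i/\partial x^j$, whereas the statement has $(\Theta^{(o)})_{i,k-d_0}$. That is the wrong matrix, not an index offset, so flag it rather than absorbing it.

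The genuine gap is Step 3's claim that in the $(v,c)$ and $(h,c)$ blocks only the two $\sigma(T_o^i)$ terms survive. For $k\in\mathcal{J}_c$ you only get $\partial_k T_o^i=0$. The cross term $\sigma'(T_o^i)\tanh'(c_t^i)\,\partial_jT_o^i\,\partial_kc_t^i$ remains. Here $\partial_jT_o^i=(\Theta^{(o)})_{i,j}$ (resp.\ $(U^{(o)})_{i,j-d_0}$) and $\partial c_t^i/\partial c_{t-1}^m=f_t^i\,\delta_{im}$, so the term is nonzero whenever $k=d_0+d+i$. Only the $(c,c)$ block loses both $T_o^i$-derivative factors.

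Hence your Step 4 assertion that each stated map is exactly the Hessian entry is false in those blocks, and the stated bound actually fails. Take $j\in\mathcal{J}_v$, $k=d_0+d+i$, and $(\Theta^{(f)})_{i,j}=(\Theta^{(I)})_{i,j}=(\Theta^{(g)})_{i,j}=0$. Then:
\begin{itemize}
\item $\partial c_t^i/\partial x^j\equiv 0$.
\item $\partial^2 c_t^i/\partial x^j\partial c_{t-1}^i=(\Theta^{(f)})_{i,j}\sigma'(T_f^i)\equiv 0$.
\item So the stated $(\mathcal{T}_{vc})_{i,j,k}\equiv 0$, and the proposed S-GBT entry is $0$.
\item The true entry is $(\Theta^{(o)})_{i,j}\sigma'(T_o^i)\tanh'(c_t^i)f_t^i$, which is nonzero whenever $(\Theta^{(o)})_{i,j}\neq 0$.
\end{itemize}
The paper's Cases 4 and 5 make the same omission, so neither argument establishes the proposition as written. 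Your method does prove a corrected version with the same min/max conclusion. That requires adding this cross term to $\mathcal{T}_{vc}$ and $\mathcal{T}_{hc}$, and making both fixes in the $(v,h)$ block.
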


The quantity \((\mathcal{T})_{i,j,k}\) measures how the LSTM output coordinate \(i\) changes when two input components \(j\) and \(k\) vary together. 
The different cases $(vv, vh, vc,$ $hh, hc, cc)$ correspond to interactions between the current input, the previous hidden state, and the cell state. 
Each term reflects the combined effect of the output gate and the cell state, involving first and second order derivatives of the activation functions. 
The use of minimum and maximum over all inputs captures the worst-case behavior, showing that the S-GBT depends on both the gating mechanism and the nonlinear structure of the LSTM.\\

\textbf{Convolutional Neural Network (CNN)}. TextCNN \cite{Kim14f} employs a one-dimensional convolution followed by a max-pooling operation to extract informative features from text.\\ 
Consider a sentence composed of \(N\) word embeddings 
\(v_x = \langle v_{w_1}, v_{w_2}, \ldots, v_{w_N} \rangle\), 
where \(v_x\in\mathbf{V}\subseteq\mathbb{R}^{N d_0}\) and $\mathbf{V}$ denotes the domain of word embeddings, and each 
\(v_{w_t} \in \mathbb{R}^{d_0}\) is the embedding of the word at position \(t\).

The convolution and max-pooling operations are defined as follows. For \(k_i \in \mathcal{K}=\{k_1,\ldots,k_m\}\subset \mathbb{Z}_{\geq2}\) and \(t \in \{1,\ldots,N-k_i+1\}\),
\begin{equation*}\label{eq:conv}
    \mathcal{C}^{(k_i)}(v_x)_t = \phi \left( b^{(k_i)} + \sum_{l=0}^{k_i-1} W^{(k_i)}_{:,:,l}\, v_{w_{t+l}} \right),
\end{equation*}
\begin{equation}\label{eq:pool}
    \mathcal{P}^{(k_i)}(\mathcal{C}^{(k_i)}(v_x)) = \max_{t=1}^{N-k_i+1} \left( \mathcal{C}^{(k_i)}(v_x)_t \right),
\end{equation}
where \(k_i\) denotes the kernel size, \(\phi\) is the Tanh activation function, \(b^{(k_i)} \in \mathbb{R}^{d}\) is the bias vector, and \(W^{(k_i)} \in \mathbb{R}^{d \times d_0 \times k_i}\) represents the convolutional filter weights, 
and \(W^{(k_i)}_{:,:,l} \in \mathbb{R}^{d \times d_0}\) denotes the \(l\)-th slice of the tensor 
\(W^{(k_i)}\) along its third dimension. The term \(v_{w_{t+l}}\) corresponds to the word embedding at position \(t+l\).

Using Eq.~(\ref{eq:pool}), the CNN layer can be expressed as an input-output mapping \(\mathcal{F}\), similar to Eq.~(\ref{eq:orig}), defined by
\begin{equation}\label{eq:cnn}
    \begin{array}{l}
    \mathcal{F}: \quad \mathbf{V} \to \mathbb{R}^{|\mathcal{K}|\,d} \\
    \qquad v_x \mapsto \bigoplus_{k=k_1}^{k_m} \mathcal{P}^{(k)} \circ \mathcal{C}^{(k)} (v_x),
    \end{array}
\end{equation}
where \(\bigoplus\) denotes the concatenation operator and \(|\mathcal{K}|\) is the cardinality of the kernel set \(\mathcal{K}\).

The following result provides an explicit expression for the S-GBT of the CNN layer with a detailed proof provided in Appendix \ref{sec:proofs}. 
\begin{proposition}
\label{prop:cnn}
    Consider the  map \(\mathcal{F}\) defined in Eq.~\eqref{eq:cnn}. For any output index \(i \in \{1,\dots,|\mathcal{K}|\cdot d\}\) and any pair of input indices \(j,k \in \{1,\dots,Nd_0\}\), the S-GBT of the map \(\mathcal{F}\) is given by:
\begin{equation}
    \label{eq:sgbtcnn}
    \begin{aligned}
        (\mathcal{T})_{i,j,k} = &\left( \max_{t=1}^{N-k_p+1} \left\| W_{\beta(i,1,d),\,\beta(j,t,d_0),\,\alpha(j,t,d_0)}^{(k_p)} \right\| \right) \\
        &\cdot  \left( \max_{t=1}^{N-k_p+1} \left\| W_{\beta(i,1,d),\,\beta(k,t,d_0),\,\alpha(k,t,d_0)}^{(k_p)} \right\| \right) \cdot \sup_{z \in \mathbb{R}} |\phi''(z)|,
    \end{aligned}
\end{equation}
where \(k_p = k_{\alpha(i,1,d)}\) is the kernel size associated with output index \(i\), and the functions \(\alpha,\beta\) are defined as follow:
\[
\alpha(i,a,d) = \left\lfloor \frac{i-a}{d} \right\rfloor + 1, \qquad
\beta(i,a,d) = 1 + ((i-a) \bmod d).
\]
The term \(\sup_{z \in \mathbb{R}} |\phi''(z)|\) denotes the supremum of the absolute value 
of the second derivative of the activation function \(\phi\) over all real inputs. 
For the hyperbolic tangent activation, one has
\(
\sup_{z \in \mathbb{R}} |\tanh''(z)| = \frac{4}{3\sqrt{3}}.
\)
\end{proposition}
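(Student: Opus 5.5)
The plan is to fix an output index $i$, identify which kernel block and which channel it belongs to, and then bound the second partial derivatives of a single pooled coordinate. By the concatenation structure of Eq.~\eqref{eq:cnn}, output index $i$ lies in block $p=\alpha(i,1,d)$, associated with kernel size $k_p$, and corresponds to channel $r=\beta(i,1,d)$ of that block. Hence
\[
\mathcal{F}^i(v_x)=\max_{t}\phi\bigl(z_t(v_x)\bigr),
\qquad
z_t(v_x)=b^{(k_p)}_r+\sum_{l=0}^{k_p-1}\sum_{s=1}^{d_0}W^{(k_p)}_{r,s,l}\,v^{s}_{w_{t+l}} .
\]
Similarly, the input index $j$ of the flattened embedding corresponds to a word position and an embedding coordinate. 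Relative to window $t$, this gives a coefficient $\partial z_t/\partial v_x^j$ that is either zero, if the word lies outside the window, or a single weight entry $W^{(k_p)}_{r,\beta(j,t,d_0),\alpha(j,t,d_0)}$. I would verify this index bookkeeping first, with the convention that out-of-range offsets give zero weight. That reading is what makes the statement's $\alpha,\beta$ formulas consistent.

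Next I treat the max-pooling. On the set where the maximizing window $t^\ast$ is locally constant, which is open and of full measure, we have $\mathcal{F}^i=\phi(z_{t^\ast})$. Since each $z_t$ is affine in $v_x$, the chain rule gives
\[
\frac{\partial^2\mathcal{F}^i}{\partial v_x^j\,\partial v_x^k}
=\phi''(z_{t^\ast})\,\frac{\partial z_{t^\ast}}{\partial v_x^j}\,\frac{\partial z_{t^\ast}}{\partial v_x^k},
\]
and the second derivative of $z_{t^\ast}$ vanishes. I then bound $|\phi''(z_{t^\ast})|\le\sup_z|\phi''(z)|$. For the two linear factors, since $t^\ast$ is unknown and depends on $x$, I bound each of them by its maximum over all windows $t$. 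This yields exactly the product in Eq.~\eqref{eq:sgbtcnn}, uniformly in $x$, which is what Definition~\ref{def:s_gbt} requires.

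The value of the constant is a routine calculus computation. Write $\tanh''(z)=-2\tanh(z)(1-\tanh^2(z))$ and set $u=\tanh z$. Maximizing $2u(1-u^2)$ over $u\in(-1,1)$ gives the critical point $u=1/\sqrt3$ and the value $4/(3\sqrt3)$.

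The main obstacle is the non-smoothness of the max at ties between windows, where $\mathcal{F}^i$ is not twice differentiable. I would handle it by stating the bound on the full-measure open set where the argmax is unique. Alternatively, one can note that the bound holds for each smooth branch $\phi(z_t)$ separately. Interpreted in the Clarke or piecewise sense, it therefore holds for the pooled map, which is the same sense in which the GBM of \cite{bouri2025bridging} is defined for the CNN. I would add a remark to this effect rather than attempt a classical second derivative at ties.
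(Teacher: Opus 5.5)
Your argument is the paper's. On the region where the max-pooling argmax $t^\ast$ is locally constant, $\mathcal{F}^i=\phi(z_{t^\ast})$ with $z_{t^\ast}$ affine, so only $\phi''(z_{t^\ast})\,\partial_j z_{t^\ast}\,\partial_k z_{t^\ast}$ survives. You then bound $|\phi''|$ by its supremum and each linear factor by its maximum over all windows $t$. You go slightly beyond the paper in two ways: you actually derive $\sup_z|\tanh''(z)|=4/(3\sqrt3)$, and you address ties explicitly, which the paper leaves implicit. Of your two readings, the almost-everywhere/piecewise one is the right one. A Clarke-type generalized Hessian is not available here, because $\nabla\mathcal{F}^i$ is discontinuous across ties, so $\mathcal{F}^i$ is not $C^{1,1}$.

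The step that will not survive the verification you promise is the index bookkeeping. Write $j=(p-1)d_0+f$, with word $p=\alpha(j,1,d_0)$ and coordinate $f=\beta(j,1,d_0)$. Your own formula for $z_t$ gives $\partial z_t/\partial v_x^j=W^{(k_p)}_{r,f,p-t}$ when $0\le p-t\le k_p-1$, and $0$ otherwise. In terms of the statement's functions, this is $\alpha,\beta$ evaluated at $a=1+(t-1)d_0$, not at $a=t$.

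With $a=t$, the second index $\beta(j,t,d_0)$ shifts the embedding coordinate with $t$. The third index $\alpha(j,t,d_0)$ is essentially the absolute word position. For example, if $N-k_p+1\le d_0$ (as with $d_0=300$, $N=256$), then $\alpha(j,t,d_0)\in\{p-1,p\}$ for every $t$. So for any word with $p\ge k_p+2$, every entry in the literal maximum lies outside the kernel range. Under your zero convention the right-hand side is then $0$, while the true diagonal entry $\phi''(z_{t^\ast})\bigl(W^{(k_p)}_{r,f,p-t^\ast}\bigr)^2$ is generally nonzero. Without that convention the entries are undefined. So the formula, read literally, is not a valid S-GBT, and your claim that your reading makes the given $\alpha,\beta$ consistent is wrong.

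This is a defect of the statement rather than of your analytic argument, and the paper's proof has the same slip. Its intermediate expression $W_{\beta(i,1,d),f,t^\ast-p}$ has the right coordinate but the offset sign reversed, and it then switches to the statement's indices without justification. What your argument, like the paper's, actually establishes is the bound with $\max_t\bigl|W^{(k_p)}_{r,f,p-t}\bigr|$ over the windows containing word $p$ (and likewise for $k$). You should state that corrected form explicitly.
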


The expression of \((\mathcal{T})_{i,j,k}\) reflects how the second-order sensitivity of the CNN output 
with respect to two input coordinates \(j\) and \(k\) decomposes into three factors. 
The first two terms correspond to the maximal influence of the input coordinates \(j\) and \(k\) 
through the convolutional filters, captured by the largest weights across all valid positions \(t\). 
The maximization arises from the max-pooling operation, which selects the most responsive feature 
over the sequence. The last term, \(\sup_{z \in \mathbb{R}} |\phi''(z)|\), captures the nonlinearity of the activation function. Overall, the bound highlights that the S-GBT is governed by the strongest filter responses and the curvature of the activation, independently of the sentence length.

\section{Experiments}
This section evaluates the proposed S-GBT method through a series of experiments. We compare its performance with five defense baselines under three types of adversarial attacks on Bidirectional Long Short-Term Memory (BiLSTM) and CNN models.

\subsection{Experimental Setup}

\textbf{Datasets:} We conduct experiments on two benchmark datasets. \textit{IMDB} \cite{maas11learning} is a binary sentiment classification dataset containing 25,000 training and 25,000 testing movie reviews. \textit{Yahoo! Answers} \cite{zhang15character} is a large-scale topic classification dataset with 1,400,000 training samples and 50,000 test samples distributed over 10 classes.\\
\textbf{Defense Baselines:} We compare our method with standard training and five representative adversarial defense approaches. These include IBP \cite{jia19certified} and GBM \cite{bouri2025bridging}, which are certified defense methods; SEM \cite{wang21natural}, an input transformation-based approach; and ATFL \cite{wang21adversarial} and ASCC \cite{dong21towards}, two adversarial training methods based on GloVe embeddings.\\
\textbf{Attack Methods:} To assess robustness, we consider three strong adversarial attacks: GA (Genetic Attack) \cite{alzantot-etal-2018-generating}, PWWS (Probability Weighted Word Saliency) \cite{shuhuai19generating}, and PSO (Particle Swarm Optimization) \cite{yuan20pso}. PWWS selects synonyms from WordNet, GA relies on counter-fitted embeddings, and PSO uses sememe-based substitutions from HowNet. Due to the high computational cost of these attacks, adversarial examples are generated from 1000 randomly selected test samples for each dataset. All attacks are implemented using the OpenAttack framework \cite{zeng-etal-2021-openattack}.\\
\textbf{Perturbation Setting:} Following \cite{jia19certified} and \cite{dong21towards}, we set \(k=8\) as the number of nearest neighbors selected within a Euclidean distance \(d_e = 0.5\) in the GloVe embedding space.\\
\textbf{Model Setting:} We use pre-trained GloVe embeddings to represent words as 300-dimensional vectors, which serve as the input layer for both BiLSTM and CNN models.\\
\textbf{Evaluation Metrics:} We report two metrics. Clean Accuracy (CA) measures the classification accuracy on the full clean test set. Accuracy Under Attack (AUA) denotes the accuracy under a given adversarial attack, evaluated on random 1000 test samples that are correctly classified by the model before the attack. Formally, let \(\mathcal{D}_{\mathrm{sub}} \subset \mathcal{D}_{\mathrm{test}}\) be a subset of size 1000 such that 
\(f(x)=y\) for all \((x,y)\in \mathcal{D}_{\mathrm{sub}}\). Then, the AUA is defined as
\[
\mathrm{AUA} = \frac{1}{|\mathcal{D}_{\mathrm{sub}}|} 
\sum_{(x,y)\in \mathcal{D}_{\mathrm{sub}}} \mathbf{1}\bigl(f(x^{\mathrm{adv}})=y\bigr),
\]
where \(x^{\mathrm{adv}}\) denotes the adversarial example generated from \(x\), 
and \(\mathbf{1}(\cdot)\) is the indicator function.

\subsection{Main Results}

\textbf{Performance on CNN and BiLSTM:} 

Table~\ref{tab:defense} reports the performance of BiLSTM and CNN models on the \textit{IMDB} and \textit{Yahoo! Answers} datasets. Each row corresponds to a defense method, while each column represents an attack. A method is considered effective if it achieves high AUA while maintaining strong performance on clean data.

Our approach consistently outperforms existing defenses, with notable gains under the PWWS and PSO attacks. For the BiLSTM model, it improves AUA by up to \textbf{9.3\%} on \textit{IMDB} and \textbf{21.0\%} on \textit{Yahoo! Answers}. For the CNN model, it achieves improvements of up to \textbf{8.0\%} on \textit{IMDB} and \textbf{23.4\%} on \textit{Yahoo! Answers}. In addition, the proposed method maintains competitive clean accuracy on both datasets. These results demonstrate its ability to enhance robustness while preserving strong performance in both clean and adversarial settings.

\begin{table}[t]
    \centering
    \caption{Clean Accuracy (CA) (\%) and Accuracy Under Attack (AUA) (\%) for CNN and BiLSTM models on two datasets. The \textit{Clean} columns report CA on the full test set. For each attack, the best result is shown in \textbf{bold} and the second best in \underline{underline}. The last row in each block reports the accuracy improvement of our method over the strongest baseline.}
    \resizebox{\textwidth}{!}{
    \begin{tabular}{llcccccccc}
        \toprule
         \multirow{2}{*}{Dataset} & \multirow{2}{*}{Defense} & \multicolumn{4}{c}{CNN} & \multicolumn{4}{c}{BiLSTM}\\
         \cmidrule(lr){3-6} \cmidrule(lr){7-10}
          ~ & ~ & Clean & PWWS & GA & PSO & Clean & PWWS & GA & PSO \\
         \midrule
         \multirow{7}{*}{\textit{IMDB}} &
         Standard &  \underline{89.7} & ~~0.6 & ~~2.6 & ~~1.4 &  \underline{89.1} & ~~0.2 & ~~1.6 &  ~~0.3 \\
         &IBP \cite{jia19certified}& 81.7 &  {75.9} &  {76.0} &  {75.9} & 77.6 & 67.5 & 67.8 & 67.6 \\
         & ATFL \cite{wang21adversarial}& 85.0 & 63.6 & 66.8 & 64.7 & 85.1 & 72.2 & 75.5 & 74.0\\
         & SEM \cite{wang21natural}& 87.6 & 62.2 & 63.5 & 61.5 & 86.8 & 61.9 & 63.7 & 62.2\\
         & ASCC \cite{dong21towards} & 84.8 & 74.0 & 75.5 & 74.5 & 84.3 &  {74.2} &  {76.8} &  {75.5}\\
         & GBM \cite{bouri2025bridging} & \textbf{90.2} & \underline{76.3} & \underline{76.6} & \underline{84.3} & \textbf{89.6} & {76.8} & {77.8} & {84.3} \\
         \midrule
         & S-GBT & {79.4} & \textbf{80.3} & \textbf{83.0} & \textbf{92.3} & {88.4} & \textbf{86.1} & \textbf{85.6} & \textbf{92.4} \\
         & & & \textcolor{blue}{\(\uparrow\)5.7} & \textcolor{blue}{\(\uparrow\)6.4} & \textcolor{blue}{\(\uparrow\)8.0} & & \textcolor{blue}{\(\uparrow\)9.3} & \textcolor{blue}{\(\uparrow\)7.8} & \textcolor{blue}{\(\uparrow\)8.1} \\
         \midrule
         \multirow{7}{*}{\textit{\shortstack{Yahoo!\\ Answers}}} &
         Standard &  {72.6} & ~~6.8 & ~~7.2 & ~~4.9 & \textbf{74.7} & 12.2 & ~~9.6 & ~~6.5 \\
         & IBP \cite{jia19certified} & 63.1 & 54.9 & 54.9 & 54.8 & 54.3 & 47.3 & 47.6 & 47.0 \\
         & ATFL \cite{wang21adversarial} & 72.5 &  {62.5} &  {63.1} &  {62.5} &  \underline{73.6} &  {61.7} & 60.8 & 60.3\\ 
         & SEM \cite{wang21natural}& 70.1 & 53.8 & 52.4 & 51.9 & 72.3 & 57.0 & 56.1 & 55.4 \\
         & ASCC \cite{dong21towards} & 69.0 & 58.4 & 59.6 & 58.5 & 70.7 &  {61.7} &  {62.3} &  {61.9} \\
         & GBM \cite{bouri2025bridging} & {\textbf{72.8}} & \underline{66.2} & \underline{66.1} & \underline{67.3} & 73.0 & \underline{67.0} & \underline{67.0} & \underline{68.6} \\
         \midrule
         & S-GBT & 63.3 & \textbf{88.6} & \textbf{89.0} & \textbf{90.7} & 66.4 & \textbf{88.0} & \textbf{87.5} & \textbf{89.8} \\
         &  & & \textcolor{blue}{\(\uparrow\)15.8} & \textcolor{blue}{\(\uparrow\)22.9} & \textcolor{blue}{\(\uparrow\)23.4} & & \textcolor{blue}{\(\uparrow\)21.0} & \textcolor{blue}{\(\uparrow\)20.5} & \textcolor{blue}{\(\uparrow\)21.2} \\
         \bottomrule
    \end{tabular}
    }
    \label{tab:defense}
\end{table}
\subsection{Clean Accuracy versus Robust Accuracy}

\textbf{Hyper-parameter Study:} 
We analyze the impact of the hyperparameters \(\beta\) and \(\gamma\) in Eq.~\ref{eq:objective_sgbt}, 
which control the trade-off between the cross-entropy loss, GBM regularization, and S-GBT regularization. 
Figure~\ref{fig:hyperparam} reports the Accuracy Under Attack (AUA) against PWWS for the BiLSTM model on the IMDB dataset over different \((\beta,\gamma)\) values. 
A similar study is conducted for the CNN model.

Overall, S-GBT consistently achieves strong performance across a wide range of \((\beta,\gamma)\) values, 
outperforming baseline methods while remaining relatively stable to hyperparameter variations. 
This indicates that the proposed regularization effectively improves robustness without requiring fine-tuned parameter choices.

For BiLSTM, the best performance is obtained at \(\beta=0.1\) and \(\gamma=0.3\), achieving an AUA of \(82.0\%\). 
For CNN, fixing \(\beta=0.1\), the highest AUA (\(86.1\%\)) is also achieved at \(\gamma=0.3\). 
These results suggest that a moderate level of S-GBT regularization provides the best trade-off between accuracy and robustness.

\begin{figure}[t]
    \centering
    \includegraphics[width=12cm]{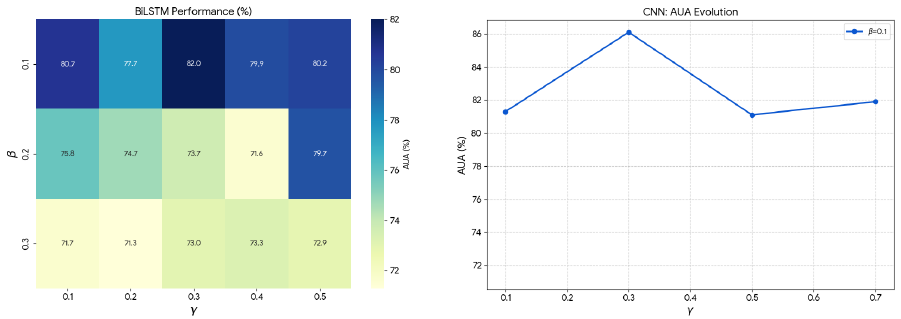}
    \caption{Hyperparameter study on the IMDB dataset for BiLSTM (left) and CNN (right) under the PWWS attack. The left heatmap shows the Accuracy Under Attack (AUA) of BiLSTM for different combinations of \(\beta\) and \(\gamma\), while the right plot reports the AUA of CNN as a function of \(\gamma\) with \(\beta=0.1\). Across a wide range of hyperparameter values, S-GBT consistently achieves high robustness and outperforms baseline methods, 
demonstrating stability with respect to \(\beta\) and \(\gamma\).}
    \label{fig:hyperparam}
\end{figure}

\subsection{Detailed Setup}
In the BiLSTM layer, the parameters $\Theta^{(\text{gate})}$ and $U^{(\text{gate})}$ are shared across all time steps. For classification, the final hidden state $h_N$ is used as the representation. Accordingly, the S-GBT is computed only for the last cell in both the forward and backward directions. The detailed S-GBT training procedure for the BiLSTM model is summarized in Table~\ref{tab:conf}.

\begin{table}[ht]
    \centering
    \caption{Training configuration and hyperparameters of the GBM training method for BiLSTM model.}
    \begin{tabular}{lc|c}
    \hline
    \textbf{Dataset} & IMDB & Yahoo! Answers \\
    \hline \hline
    \textbf{Optimizer} & \multicolumn{2}{c}{$\operatorname{Adam}$} \\
    \hline
    \textbf{Batch size} & \multicolumn{2}{c}{64} \\
    \hline
    \textbf{Hidden size} & \multicolumn{2}{c}{64} \\
    \hline
    \textbf{Learning rate} & \multicolumn{2}{c}{$10^{-3}$} \\
    \hline
    \textbf{Weight decay} & \multicolumn{2}{c}{$10^{-4}$} \\
    \hline
    \textbf{Max length} & 512 & 256 \\
    \hline
    \textbf{Early Stopping} & \multicolumn{2}{c}{Yes} \\
    \hline
    \end{tabular}
    \label{tab:conf}
\end{table}

The CNN model is trained using the S-GBT approach as describe the Table \ref{tab:conf_cnn}.
All experiments were conducted on a single NVIDIA A100 80GB GPU from Toubkal Supercomputer \cite{kissami2025toubkal}. 
\begin{table}[h]
    \centering
    \caption{Hyperparameter Settings and Training Configuration for GBM in CNN Model.}
    
    \begin{tabular}{lc|c}
    \hline
    \textbf{Dataset} & IMDB & Yahoo! Answers \\
    \hline \hline
    \textbf{Optimizer} & \multicolumn{2}{c}{$\operatorname{Adam}$} \\
    \hline
    \textbf{Batch size} & \multicolumn{2}{c}{64} \\
    \hline
    \textbf{Num kernels} & \multicolumn{2}{c}{128} \\
    \hline
    \textbf{Learning rate} & \multicolumn{2}{c}{$10^{-4}$} \\
    \hline
    \textbf{Weight decay} & \multicolumn{2}{c}{$10^{-4}$} \\
    \hline
    \textbf{Kernel sizes} & \multicolumn{2}{c}{(3, 4, 5)} \\
    \hline
    \textbf{Max length} & 512 & 256 \\
    \hline
    \textbf{Early Stopping} & \multicolumn{2}{c}{Yes} \\
    \hline
    \end{tabular}
    \label{tab:conf_cnn}
\end{table}

\section{Discussion}

The results in Table~\ref{tab:defense} indicate that incorporating S-GBT regularization consistently enhances adversarial robustness across different models and datasets. The improvements are particularly significant on the Yahoo dataset, where accuracy under attack increases by up to $23.4\%$ for CNN and more than $20\%$ for BiLSTM compared to GBM alone. On IMDB, the gains range from $5.7\%$ to $9.3\%$ depending on the attack. These results confirm that S-GBT effectively reduces model curvature, resulting in smoother decision boundaries and improved robustness.

The strong results on Yahoo, which involves more classes and longer input sequences, indicate that S-GBT regularization is particularly effective in complex settings. 
While GBM reduces local sensitivity by penalizing gradient norms, S-GBT further stabilizes the model by controlling variations in the gradients (the curvature), which is crucial for robustness against stronger attacks such as PSO. Unlike greedy attacks such as PWWS, which replace words one at a time, PSO performs a global search over the synonym substitution space and can identify combinations of word replacements that jointly exploit interactions between perturbations. These interactions are captured by second-order (Hessian) terms in the output variation. 
By explicitly bounding these terms (Proposition~\ref{prop:s_gbt_robustness}), S-GBT limits the model’s sensitivity to such combinatorial perturbations, leading to substantial improvements of up to 23.4\% for CNN and 21.2\% for BiLSTM on Yahoo under the PSO attack. This highlights the importance of controlling curvature for defending against attacks that leverage higher-order effects.

Overall, S-GBT extends GBM by explicitly limiting curvature, resulting in lower sensitivity and improved robustness to adversarial attacks. The method remains simple to integrate, and especially effective on challenging datasets.
\section{Conclusion}
This paper presents the Smoothness Growth Bound Tensor (S-GBT), which extends the Growth Bound Matrix (GBM) framework for certified robustness against word substitution attacks. By bounding the second-order derivatives of the model output with respect to the input, S-GBT enables tighter control over output variations under perturbations. We derive explicit formulations for both BiLSTM and CNN architectures and incorporate GBM and S-GBT regularization into the training objective.\\
Experiments on the IMDB and Yahoo datasets show that S-GBT consistently improves adversarial robustness compared to GBM-only and other baselines. On the more complex Yahoo dataset, the method achieves gains of up to $23.4\%$ for CNN and over $20\%$ for BiLSTM under attack. These results indicate that controlling both gradient and curvature leads to smoother decision boundaries and stronger robustness. The proposed approach remains lightweight and performs well on high-dimensional and challenging datasets. By accounting for second-order effects, this work provides a step toward more reliable NLP models under adversarial settings.

\section*{Limitations}
A limitation of the proposed S-GBT regularization lies in its dependence on the model architecture. In some cases, the Hessian may vanish due to linear dynamics, making S-GBT ineffective. Similarly, for ReLU-based CNNs with max-pooling, the Hessian is zero almost everywhere, which limits the benefit of S-GBT unless smooth activations such as $\tanh$ and average pooling are used. Therefore, the choice of architecture plays an important role when applying this method.\\
Despite this limitation, the approach remains well-suited for word substitution attacks with fixed-length perturbations. Future work will focus on extending the method to handle a wider range of word-level attacks, including variable-length perturbations such as word deletion, in order to improve its applicability to more fine-grained text classification tasks.

\bibliographystyle{splncs04}
\bibliography{custom}
\newpage
\appendix
\section{Appendix}
\subsection{Robustness Analysis with S-GBT}
\label{sec:rob}
\subsubsection{Proof of Proposition \ref{prop:s_gbt_robustness}}

\begin{proof}
Consider the map \( \mathcal{F}: \mathbf{X} \rightarrow \mathbf{Y}\), where \(\mathbf{X} \subseteq \mathbb{R}^{n_x}\) and \(\mathbf{Y} \subseteq \mathbb{R}^{n_y}\). Each component \( \mathcal{F}^i: \mathbf{X}\subseteq \mathbb{R}^{n_x} \rightarrow \mathbb{R}\) is a scalar function. Given an input \( x \in \mathbf{X} \), a perturbation vector \( {\delta} \in \mathbb{R}^{n_x} \) and consider the perturbed input $x'\in \mathbf{X}$ such that $x'=x+{\delta}$. 

By Taylor’s theorem with Lagrange remainder, there exists $\xi$ on the line segment between $x$ and $x'$ such that
\begin{equation*}
    \mathcal{F}^i(x') = \mathcal{F}^i(x) + \nabla\mathcal{F}^i(x)^\top\delta
+ \frac12\delta^\top\nabla^2\mathcal{F}^i(\xi)\delta.
\end{equation*}

Taking norms on both sides and applying the triangle inequality, one gets:
\begin{equation}
\label{eq:prop1prof}
    \left\lVert\mathcal{F}^i(x')-\mathcal{F}^i(x)\right\rVert
\le \left\lVert\nabla\mathcal{F}^i(x)^\top\delta\right\rVert
+ \frac12\left\lVert\delta^\top\nabla^2\mathcal{F}^i(\xi)\delta\right\rVert.
\end{equation}

Since \(\nabla\mathcal{F}^i(c) = \left( \dfrac{\partial \mathcal{F}^i}{\partial x^1}(c),\dots,\dfrac{\partial \mathcal{F}^i}{\partial x^{n_x}}(c)\right)^T\) and \(\nabla^2\mathcal{F}^i(\xi)\) is the $n_x\times n_x$ matrix evaluated at $\xi$ whose $(j,k)$-th entry is $\dfrac{\partial^2\mathcal{F}^i}{\partial x^j\partial x^k}(\xi)$. By the Definition \ref{def:gbm} of GBM and the Definition \ref{def:s_gbt} of S-GBT, we have for $\xi$ on the line segment between $x$ and $x'$, and for all $i\in\{1,\dots,n_y\}$, $j,k\in\{1,\dots,n_x\}$:

\[
\left\lVert \dfrac{\partial \mathcal{F}^i}{\partial x^j}(\xi)\right\rVert \leq (\mathcal{M})_{ij} \quad \text{and} \quad \left\lVert\frac{\partial^2\mathcal{F}^i}{\partial x^j\partial x^k}(x)\right\rVert \le (\mathcal{T})_{i,j,k}
\]

Hence, one gets from Inequality \eqref{eq:prop1prof} that:
\[
\begin{aligned}
    \left\lVert\mathcal{F}^i(x')-\mathcal{F}^i(x)\right\rVert
&\le \left\lVert\nabla\mathcal{F}^i(x)^\top\delta\right\rVert
+ \frac12\left\lVert\delta^\top\nabla^2\mathcal{F}^i(\xi)\delta\right\rVert.\\
&\le \sum\limits_{j=1}^{n_x}(\mathcal{M})_{i,j}\|{\delta_j}\| + \frac12 \sum_{j,k}\left\lVert\frac{\partial^2\mathcal{F}^i}{\partial x^j\partial x^k}(\xi)\right\rVert\|\delta_j\|\|\delta_k\|\\
& \le \sum\limits_{j=1}^{n_x}(\mathcal{M})_{i,j}\|{\delta_j}\| + \frac12 \sum_{j,k}(\mathcal{T})_{i,jk}\|\delta_j\|\|\delta_k\|
\end{aligned}
\]
Then, we conclude the following result:
\[
\begin{aligned}
\mathcal{F}^i(x)
- \sum_{j=1}^{n_x} (\mathcal{M})_{ij} \| \delta_j \|
-& \frac{1}{2} \sum_{j,k=1}^{n_x} (\mathcal{T})_{ijk} \| \delta_j \| \| \delta_k \|
\le \mathcal{F}^i(x') \\
&\le \mathcal{F}^i(x)
+ \sum_{j=1}^{n_x} (\mathcal{M})_{ij} \| \delta_j \|
+ \frac{1}{2} \sum_{j,k=1}^{n_x} (\mathcal{T})_{ijk} \| \delta_j \| \| \delta_k \| .
\end{aligned}
\]
which concludes the proof.\hfill $\square$
\end{proof}

\subsection{Auxiliary Results}
\begin{proposition}
\label{prop:interval}
    Consider the mapping
\begin{align*}
    \varphi: \mathcal{X} &\to \mathbb{R}, \\ 
    x &\mapsto a^\top x,
\end{align*}
where $\mathcal{X} = \prod_{r=1}^d [\underline{x}_r,\overline{x}_r] \subset \mathbb{R}^d$ 
and $a \in \mathbb{R}^d$ is a fixed vector. For any $x \in \mathcal{X}$, the value of $\varphi(x)$ is bounded within the interval
\[
\varphi(x) \in [\,\underline{a^\top x}, \overline{a^\top x}\,],
\]
where the lower and upper bounds are given by
\[
\underline{a^\top x} = \sum_{r=1}^d a_r \cdot
\begin{cases}
\underline{x}_r & \text{if } a_r \geq 0,\\
\overline{x}_r & \text{if } a_r < 0,
\end{cases}
\quad
\overline{a^\top x} = \sum_{r=1}^d a_r \cdot
\begin{cases}
\overline{x}_r & \text{if } a_r \geq 0,\\
\underline{x}_r & \text{if } a_r < 0.
\end{cases}
\]
\end{proposition}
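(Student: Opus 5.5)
The plan is to bound $a^\top x$ one coordinate at a time. Write $a^\top x=\sum_{r=1}^d a_r x_r$. Because $\mathcal{X}$ is a product of intervals, each coordinate $x_r$ can vary freely in $[\underline{x}_r,\overline{x}_r]$, independently of the others. So it suffices to bound each summand $a_r x_r$ separately and then add the resulting bounds.

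For a fixed index $r$, split on the sign of $a_r$.
\begin{itemize}
\item If $a_r\ge 0$, multiplying $\underline{x}_r\le x_r\le \overline{x}_r$ by $a_r$ keeps the order, so $a_r\underline{x}_r\le a_r x_r\le a_r\overline{x}_r$.
\item If $a_r<0$, multiplying by $a_r$ reverses the inequalities, so $a_r\overline{x}_r\le a_r x_r\le a_r\underline{x}_r$.
\end{itemize}
In both cases the lower endpoint is exactly the $r$-th term of $\underline{a^\top x}$ and the upper endpoint is the $r$-th term of $\overline{a^\top x}$, matching the case split in the statement.

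Summing these $d$ inequalities over $r$ gives $\underline{a^\top x}\le \varphi(x)\le \overline{a^\top x}$ for every $x\in\mathcal{X}$, which is the claim. If tightness is wanted as well, take the vertex of $\mathcal{X}$ that picks, in each coordinate, the endpoint named in the corresponding case; it attains each bound with equality. So the interval is the exact image of $\varphi$ on $\mathcal{X}$, since $\varphi$ is continuous and $\mathcal{X}$ is connected.

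I expect no real obstacle. The only point needing care is the sign reversal when $a_r<0$. The case $a_r=0$ is harmless: it contributes $0$ to both bounds, whichever endpoint the convention assigns.
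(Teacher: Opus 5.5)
Your proof is correct: splitting on the sign of $a_r$, bounding each summand $a_r x_r$ separately and summing is the standard interval-arithmetic argument, and your remark that the extreme vertices attain both bounds is a valid extra. The paper states this auxiliary proposition without proof, treating it as a known fact, so your argument supplies exactly the justification it leaves implicit.
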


\begin{proposition}
\label{prop:T}

    For each \(i \in \{1,\ldots,d\}\), consider the mapping \((v_{w_t}, h_{t-1}) \mapsto T^i_{gate}(v_{w_t}, h_{t-1})\) defined by
\begin{equation*}
    T^i_{gate} = \sum_{p=1}^{d_0} (\Theta^{(gate)})_{pi}\, v_{w_t}^{p} + \sum_{q=1}^{d} (U^{(gate)})_{qi}\, h_{t-1}^{q} + b^{(gate)}_i,
\end{equation*}
where \(\Theta^{(gate)} \in \mathbb{R}^{d \times d_0}\), \(U^{(gate)} \in \mathbb{R}^{d \times d}\), and \(b^{(gate)} \in \mathbb{R}^{d}\), for \(gate \in \{f, I, g, o\}\).

Assume that \(v_{w_t} \in [\underline{v}, \overline{v}] \subseteq \mathbb{R}^{d_0}\) and \(h_{t-1} \in [\underline{h}, \overline{h}] \subseteq \mathbb{R}^{d}\). Then \(T^i_{gate}\) is bounded as
\[
T^i_{gate} \in [\underline{T}^i, \overline{T}^i],
\]
where the bounds are given by
\begin{align*}
    \underline{T}^i &= \sum_{p=1}^{d_0} \underline{\alpha}_{pi} + \sum_{q=1}^{d} \underline{\beta}_{qi} + b^{(gate)}_i, \\
    \overline{T}^i &= \sum_{p=1}^{d_0} \overline{\alpha}_{pi} + \sum_{q=1}^{d} \overline{\beta}_{qi} + b^{(gate)}_i,
\end{align*}
with
\begin{align*}
    \underline{\alpha}_{pi} &= 
    \begin{cases}
    (\Theta^{(gate)})_{pi}\, \underline{v}_{w_t}^{p}, & \text{if } (\Theta^{(gate)})_{pi} \geq 0, \\
    (\Theta^{(gate)})_{pi}\, \overline{v}_{w_t}^{p}, & \text{otherwise},
    \end{cases} \\
    \overline{\alpha}_{pi} &= 
    \begin{cases}
    (\Theta^{(gate)})_{pi}\, \overline{v}_{w_t}^{p}, & \text{if } (\Theta^{(gate)})_{pi} \geq 0, \\
    (\Theta^{(gate)})_{pi}\, \underline{v}_{w_t}^{p}, & \text{otherwise},
    \end{cases} \\
    \underline{\beta}_{qi} &= 
    \begin{cases}
    (U^{(gate)})_{qi}\, \underline{h}_{t-1}^{q}, & \text{if } (U^{(gate)})_{qi} \geq 0, \\
    (U^{(gate)})_{qi}\, \overline{h}_{t-1}^{q}, & \text{otherwise},
    \end{cases} \\
    \overline{\beta}_{qi} &= 
    \begin{cases}
    (U^{(gate)})_{qi}\, \overline{h}_{t-1}^{q}, & \text{if } (U^{(gate)})_{qi} \geq 0, \\
    (U^{(gate)})_{qi}\, \underline{h}_{t-1}^{q}, & \text{otherwise}.
    \end{cases}
\end{align*}
\end{proposition}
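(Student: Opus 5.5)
The statement to prove is Proposition~\ref{prop:T}, which is a direct instance of Proposition~\ref{prop:interval}. My plan is to write $T^i_{gate}$ as an affine function of the stacked vector $z=(v_{w_t},h_{t-1})\in\mathbb{R}^{d_0+d}$. Concretely, $T^i_{gate}=a^\top z+b^{(gate)}_i$, where $a=\bigl((\Theta^{(gate)})_{1i},\dots,(\Theta^{(gate)})_{d_0 i},(U^{(gate)})_{1i},\dots,(U^{(gate)})_{d i}\bigr)$. The box assumption $v_{w_t}\in[\underline{v},\overline{v}]$, $h_{t-1}\in[\underline{h},\overline{h}]$ says exactly that $z$ lies in the product of intervals $\prod_{r=1}^{d_0+d}[\underline{z}_r,\overline{z}_r]$, with the first $d_0$ intervals coming from $v$ and the last $d$ from $h$. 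So the hypotheses of Proposition~\ref{prop:interval} hold with dimension $d_0+d$.

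For a self-contained argument rather than only citing Proposition~\ref{prop:interval}, I will bound each summand separately. For fixed $p$, the map $v^p\mapsto(\Theta^{(gate)})_{pi}v^p$ is monotone on $[\underline{v}^p,\overline{v}^p]$. It is nondecreasing if the coefficient is $\ge 0$, so its minimum and maximum are attained at $\underline{v}^p$ and $\overline{v}^p$ respectively. If the coefficient is negative the order reverses. This gives $\underline{\alpha}_{pi}\le(\Theta^{(gate)})_{pi}v^p\le\overline{\alpha}_{pi}$. The same reasoning applied to $(U^{(gate)})_{qi}h^q$ gives $\underline{\beta}_{qi}\le(U^{(gate)})_{qi}h^q\le\overline{\beta}_{qi}$.

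Summing these $d_0+d$ inequalities and adding the constant bias $b^{(gate)}_i$ then yields $\underline{T}^i\le T^i_{gate}\le\overline{T}^i$. This conclusion holds for every gate and every $i$ independently.

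There is no real obstacle here. The only point needing care is bookkeeping. The statement indexes the weights as $(\Theta^{(gate)})_{pi}$, whereas elsewhere (for example $T_o^i$ in Proposition~\ref{prop:lstm}) the paper uses $(\Theta^{(o)})_{ip}$. I will simply take the coefficient of $v^p$ in $T^i_{gate}$ as written, since the argument only uses the sign of each coefficient. I would also note that the bounds are tight, because each coordinate can attain its endpoint independently in a box, though this is not required for the claim.
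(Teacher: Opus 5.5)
Your proposal is correct and follows the route the paper intends. The paper states this result without proof, directly after Proposition~\ref{prop:interval}, of which it is the instance obtained by stacking $(v_{w_t},h_{t-1})$ into one vector and adding the constant bias, exactly as you do; your termwise monotonicity argument supplies the justification the paper omits, and reading the transposed index $(\Theta^{(gate)})_{pi}$ simply as the coefficient of $v_{w_t}^p$ is a sound way to handle the paper's inconsistent indexing, since only the signs of the coefficients matter.
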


\subsection{Proofs of Architecture-Specific S-GBT}
\label{sec:proofs}
In this section, we provide the proofs of the main results, specifically Proposition~2 and Proposition~3. For CNN architectures, the S-GBT can be directly computed from the learned parameters, making the derivation straightforward. In contrast, for LSTM models, the computation is more complex and requires bounding several intermediate functions.
\subsubsection{Proof of Proposition \ref{prop:lstm}}
\begin{proof}
Consider the map $\mathcal{F}$ describing the input-output model on an LSTM cell defined in (\ref{eq:lstm}), where the input is \(\ x=(v_{w_t}, h_{t-1}, c_{t-1}) \in \mathbf{V} \times \mathbf{H} \times \mathbf{C} \subseteq \mathbb{R}^{d_0+2d}\) and the output is \(\,y_t = \mathcal{F}(v_{w_t}, h_{t-1}, c_{t-1}) = o_t \odot \tanh(f_t \odot c_{t-1} + I_t \odot g_t) \in \mathbb{R}^d\), with,
\begin{align}
    I_t &= \sigma\left(\Theta^{(I)}. v_{w_t} + U^{(I)} .h_{t-1} + b^{(I)}\right)  \label{eq:it}\\
    f_t &= \sigma\left(\Theta^{(f)} .v_{w_t} + U^{(f)} .h_{t-1} + b^{(f)}\right)  \label{eq:ft}\\
    g_t &= \tanh\left(\Theta^{(g)} .v_{w_t} + .U^{(g)} h_{t-1} + b^{(g)}\right)  \label{eq:gt}\\
    o_t &= \sigma\left(\Theta^{(o)} .v_{w_t} + U^{(o)} .h_{t-1} + b^{(o)}\right)
     \label{eq:ot}
\end{align}where $\sigma$ denotes the sigmoid function and $\tanh$ denotes the hyperbolic tangent function. For $gate \in \{I,f,g,o\}$, the parameters $\Theta^{(gate)} \in \mathbb{R}^{d\times d_0} , U^{(gate)} \in \mathbb{R}^{d\times d}$ and $b^{(gate)} \in \mathbb{R}^{d}$ are the input-hidden weights, hidden-hidden weights, and biases, respectively.

Lets consider the following:
\begin{small}
    \begin{equation*}
    T_{gate}^i = \sum_{p=1}^{d_0} (\Theta^{(gate)})_{ip}.v_{w_t}^{p} + \sum_{q=1}^{d} (U^{(gate)})_{iq}.h_{t-1}^{q} + b_i^{(gate)}
\end{equation*}
\end{small}
with \(\sigma'\) and \(\tanh'\) are the derivative of the sigmoid and hyperbolic tangent functions, respectively.

Here, \(\mathbf{V}\) denotes the domain of the word vectors, \(\mathbf{H}\) denotes the domain of hidden states, and \(\mathbf{C}\) denotes the domain of cell states. The i-th component of the map $\mathcal{F}$ is given for $i \in \{1,\ldots,d\}$ by: 
$$\mathcal{F}^i(x)=o_t^i \odot \tanh(f^i_t \odot c^i_{t-1} + I^i_t \odot g^i_t). $$ 

We consider a fixed output component \(i\in\{1,\dots,d\}\) and two input indices \(j,k\in\{1,\dots,d_0+2d\}\). The input vector \(x\in \mathbf{V} \times \mathbf{H} \times \mathbf{C} \subseteq \mathbb{R}^{d_0+2d}\) is the concatenation of the word vector \(v_{w_t}\in\mathbb{R}^{d_0}\), the previous hidden state \(h_{t-1}\in\mathbb{R}^{d}\), and the previous cell state \(c_{t-1}\in\mathbb{R}^{d}\). We partition the index set into
\[
\mathcal{J}_v = \{1,\dots,d_0\},\quad
\mathcal{J}_h = \{d_0+1,\dots,d_0+d\},\quad
\mathcal{J}_c = \{d_0+d+1,\dots,d_0+2d\}.
\]

Based on the GBM expression, we have the following:
\[
\frac{\partial \mathcal{F}^i}{\partial x^j} = \frac{\partial o_t^i}{\partial x^j}\tanh(c_t^i) + o_t^i\,\tanh'(c_t^i)\frac{\partial c_t^i}{\partial x^j}.
\]

Then,
\begin{equation}
\label{eq:full}
    \begin{aligned}
    \frac{\partial^2 \mathcal{F}^i}{\partial x^j\partial x^k} &= 
\frac{\partial^2 o_t^i}{\partial x^j\partial x^k}\tanh(c_t^i)
+ \frac{\partial o_t^i}{\partial x^j}\tanh'(c_t^i)\frac{\partial c_t^i}{\partial x^k}
+ \frac{\partial o_t^i}{\partial x^k}\tanh'(c_t^i)\frac{\partial c_t^i}{\partial x^j}\\
&+ o_t^i\,\tanh''(c_t^i)\frac{\partial c_t^i}{\partial x^j}\frac{\partial c_t^i}{\partial x^k}
+ o_t^i\,\tanh'(c_t^i)\frac{\partial^2 c_t^i}{\partial x^j\partial x^k}.
\end{aligned}
\end{equation}

Now for the various cases of \(j\) and \(k\), we have the following:

Based on Eq.\eqref{eq:ot}, \(o_t^i = \sigma(T_o^i)\). Hence
\begin{equation}
    \label{eq:ott}
    \frac{\partial o_t^i}{\partial x^j} = W^{(o)}(i,j)\,\sigma'(T_o^i) \quad \text{and} \quad
\frac{\partial^2 o_t^i}{\partial x^j\partial x^k} = W^{(o)}(i,j)\,W^{(o)}(i,k)\,\sigma''(T_o^i),
\end{equation}
where
\[
W^{(o)}(i,j) = \begin{cases}
(\Theta^{(o)})_{ij} & \text{if } j\in\mathcal{J}_v,\\
(U^{(o)})_{i,j-d_0} & \text{if } j\in\mathcal{J}_h,\\
0 & \text{if } j\in\mathcal{J}_c.
\end{cases}
\]

Or, \(c_t^i = f_t^i c_{t-1}^i + I_t^i g_t^i\). So, applying the GBM expression yields the following: 
\begin{equation}
    \label{eq:devct}
    \frac{\partial c_t^i}{\partial x^j} = \frac{\partial f_t^i}{\partial x^j}c_{t-1}^i + f_t^i\frac{\partial c_{t-1}^i}{\partial x^j} + \frac{\partial I_t^i}{\partial x^j}g_t^i + I_t^i\frac{\partial g_t^i}{\partial x^j}.
\end{equation}
where,
\[
\frac{\partial f_t^i}{\partial x^j} = W^{(f)}(i,j)\,\sigma'(T_f^i),\quad
\frac{\partial I_t^i}{\partial x^j} = W^{(I)}(i,j)\,\sigma'(T_I^i),\quad
\frac{\partial g_t^i}{\partial x^j} = W^{(g)}(i,j)\,\tanh'(T_g^i).
\]

Based on Eq.\eqref{eq:devct}, we derive the following:
\begin{equation}
    \label{eq:ctt}
    \begin{aligned}
\frac{\partial^2 c_t^i}{\partial x^j \partial x^k}
&= W^{(f)}{(i,j)}W^{(f)}{(i,k)}\,\sigma''(T_f^i)\,c_{t-1}^i \\
&\quad + W^{(I)}{(i,j)}W^{(I)}{(i,k)}\,\sigma''(T_I^i)\,g_t^i \\
&\quad + W^{(I)}{(i,j)}W^{(g)}{(i,k)}\,\sigma'(T_I^i)\tanh'(T_g^i) \\
&\quad + W^{(I)}{(i,k)}W^{(g)}{(i,j)}\,\sigma'(T_I^i)\tanh'(T_g^i) \\
&\quad + W^{(g)}{(i,j)}W^{(g)}{(i,k)}\,I_t^i\,\tanh''(T_g^i).
\end{aligned}
\end{equation}

Based on Eqs.\eqref{eq:full}\eqref{eq:ott}\eqref{eq:ctt}, we conclude the following:\\
\textbf{Case 1:} if $j,k\in \mathcal{J}_{v}$
\[
    \begin{aligned}
    ({\mathcal{T}}_{vv})_{i,j,k} (x)
    &= (\Theta^{(o)})_{i,j}(\Theta^{(o)})_{i,k}\,\sigma''(T_o^i)\tanh(c_t^i)
    + (\Theta^{(o)})_{i,j}\,\sigma'(T_o^i)\tanh'(c_t^i)\frac{\partial c_t^i}{\partial x^k} \\
    & + (\Theta^{(o)})_{i,k}\,\sigma'(T_o^i)\tanh'(c_t^i)\frac{\partial c_t^i}{\partial x^j}
    + \sigma(T_o^i)\tanh''(c_t^i)\frac{\partial c_t^i}{\partial x^j}\frac{\partial c_t^i}{\partial x^k} \\
    &+ \sigma(T_o^i)\tanh'(c_t^i)\frac{\partial^2 c_t^i}{\partial x^j \partial x^k}.
    \end{aligned}
\]
Hence, it follows that:
\[
 \left\lVert\frac{\partial^2\mathcal{F}^i}{\partial x^j\partial x^k}(x)\right\rVert\le \max\bigl( |(\underline{{\mathcal{T}}_{vv}})_{i,j,k}|, |(\overline{{\mathcal{T}}_{vv}})_{i,j,k}| \bigr).
\]
\textbf{Case 2:} if $j,k\in \mathcal{J}_{h}$
\[
    \begin{aligned}
    ({\mathcal{T}}_{hh})_{i,j,k} (x)
    &= (U^{(o)})_{i,j-d_0}(U^{(o)})_{i,k-d_0}\,\sigma''(T_o^i)\tanh(c_t^i) \\
    &+ (U^{(o)})_{i,j-d_0}\,\sigma'(T_o^i)\tanh'(c_t^i)\frac{\partial c_t^i}{\partial x^{k-d_0}} \\
    &+ (U^{(o)})_{i,k-d_0}\,\sigma'(T_o^i)\tanh'(c_t^i)\frac{\partial c_t^i}{\partial x^{j-d_0}} \\
    &+\sigma(T_o^i)\tanh''(c_t^i)\frac{\partial c_t^i}{\partial x^{j-d_0}}\frac{\partial c_t^i}{\partial x^{k-d_0}} \\
    & + \sigma(T_o^i)\tanh'(c_t^i)\frac{\partial^2 c_t^i}{\partial x^{j-d_0} \partial x^{k-d_0}}.
    \end{aligned}
\]
Hence, it follows that:
\[
 \left\lVert\frac{\partial^2\mathcal{F}^i}{\partial x^j\partial x^k}(x)\right\rVert\le \max\bigl( |(\underline{{\mathcal{T}}_{hh}})_{i,j,k}|, |(\overline{{\mathcal{T}}_{hh}})_{i,j,k}| \bigr).
\]
\textbf{Case 3:} if $j,k\in \mathcal{J}_{v}\times \mathcal{J}_{h}$
\[
    \begin{aligned}
    ({\mathcal{T}}_{vh})_{i,j,k} (x)
    &= (\Theta^{(o)})_{i,j}\,\sigma'(T_o^i)\tanh'(c_t^i)\frac{\partial c_t^i}{\partial x^{k-d_0}} \\
    &+ (\Theta^{(o)})_{i,{k-d_0}}\,\sigma'(T_o^i)\tanh'(c_t^i)\frac{\partial c_t^i}{\partial x^j} \\
    &+ \sigma(T_o^i)\tanh''(c_t^i)\frac{\partial c_t^i}{\partial x^j}\frac{\partial c_t^i}{\partial x^{k-d_0}} + \sigma(T_o^i)\tanh'(c_t^i)\frac{\partial^2 c_t^i}{\partial x^j \partial x^{k-d_0}}.
    \end{aligned}
\]
Hence, it follows that:
\[
 \left\lVert\frac{\partial^2\mathcal{F}^i}{\partial x^j\partial x^k}(x)\right\rVert\le \max\bigl( |(\underline{{\mathcal{T}}_{vh}})_{i,j,k}|, |(\overline{{\mathcal{T}}_{vh}})_{i,j,k}| \bigr).
\]
\textbf{Case 4:} if $j,k\in \mathcal{J}_{v}\times \mathcal{J}_{c}$
\[
({\mathcal{T}}_{vc})_{i,j,k} (x) = \sigma(T_o^i)\tanh''(c_t^i)\frac{\partial c_t^i}{\partial x^j}\frac{\partial c_t^i}{\partial x^{k-d_0-d}} + \sigma(T_o^i)\tanh'(c_t^i)\frac{\partial^2 c_t^i}{\partial x^j \partial x^{k-d_0-d}}.
\]
Hence, it follows that:
\[
 \left\lVert\frac{\partial^2\mathcal{F}^i}{\partial x^j\partial x^k}(x)\right\rVert\le \max\bigl( |(\underline{{\mathcal{T}}_{vc}})_{i,j,k}|, |(\overline{{\mathcal{T}}_{vc}})_{i,j,k}| \bigr).
\]
\textbf{Case 5:} if $j,k\in \mathcal{J}_{h}\times \mathcal{J}_{c}$
\[
\begin{aligned}
    ({\mathcal{T}}_{hc})_{i,j,k}(x)
    &= \sigma(T_o^i)\tanh''(c_t^i)\frac{\partial c_t^i}{\partial x^{j-d_0}}\frac{\partial c_t^i}{\partial x^{k-d_0-d}}\\
    &+ \sigma(T_o^i)\tanh'(c_t^i)\frac{\partial^2 c_t^i}{\partial x^{j-d_0} \partial x^{k-d_0-d}}.
\end{aligned}
\]
Hence, it follows that:
\[
 \left\lVert\frac{\partial^2\mathcal{F}^i}{\partial x^j\partial x^k}(x)\right\rVert\le \max\bigl( |(\underline{{\mathcal{T}}_{hc}})_{i,j,k}|, |(\overline{{\mathcal{T}}_{hc}})_{i,j,k}| \bigr).
\]
\textbf{Case 6:} if $j,k\in \mathcal{J}_{c}$
\[
\begin{aligned}
    ({\mathcal{T}}_{cc})_{i,j,k}(x)
    &= \sigma(T_o^i)\tanh''(c_t^i)\frac{\partial c_t^i}{\partial x^{j-d_0-d}}\frac{\partial c_t^i}{\partial x^{k-d_0-d}}\\
    &+ \sigma(T_o^i)\tanh'(c_t^i)\frac{\partial^2 c_t^i}{\partial x^{j-d_0-d} \partial x^{k-d_0-d}}.
\end{aligned}
\]
Hence, it follows that:
\[
 \left\lVert\frac{\partial^2\mathcal{F}^i}{\partial x^j\partial x^k}(x)\right\rVert\le \max\bigl( |(\underline{{\mathcal{T}}_{cc}})_{i,j,k}|, |(\overline{{\mathcal{T}}_{cc}})_{i,j,k}| \bigr),
\]
which concludes the proof.\hfill $\square$
\end{proof}

The computation of the Smooth Growth Bound Tensor (S-GBT) for LSTM models relies on a sequence of analytical bounds combined with interval-based calculations. For each output component \(i\) and each pair of input indices \((j,k)\), the procedure starts by bounding the gate pre-activations \(T_o^i\), \(T_f^i\), \(T_I^i\), and \(T_g^i\) using Proposition~\ref{prop:T}.

Next, the corresponding activation functions and their derivatives are bounded over these intervals. In particular, we use known ranges for the sigmoid and hyperbolic tangent functions and their derivatives, namely: \(\sigma \in (0,1)\), \(\sigma' \in (0,\tfrac{1}{4}]\), \(\sigma'' \in [-\tfrac{1}{6}, \tfrac{1}{6}]\), \(\tanh \in (-1,1)\), \(\tanh' \in (0,1]\), and \(\tanh'' \in [-\tfrac{4}{3}, \tfrac{4}{3}]\). These bounds are applied to obtain interval estimates for each activation term and its derivatives.

Using the LSTM update equations, we then bound the cell state \(c_t^i\) and its first partial derivatives \(\frac{\partial c_t^i}{\partial x_j}\) through interval arithmetic using Proposition \ref{prop:interval}. These results are used to derive bounds for the second partial derivatives \(\frac{\partial^2 c_t^i}{\partial x_j \partial x_k}\), again using interval operations applied to the gate expressions.

Finally, all intermediate bounds are combined to obtain an interval enclosing \(\frac{\partial^2 F_i}{\partial x_j \partial x_k}\). The corresponding S-GBT entry is defined as the maximum absolute value within this interval. This process is repeated for all output indices \(i\) and input pairs \((j,k)\), with appropriate handling depending on whether the indices correspond to word embeddings, hidden states, or cell states. The resulting tensor provides certified element-wise bounds on the curvature of the LSTM model.

\subsubsection{Proof of Proposition \ref{prop:cnn}}
\begin{proof}
Consider the map \(\mathcal{F}\) defined in Eq.~\eqref{eq:cnn}, which characterizes the input-output transformation of a convolutional neural network (CNN). The input is a flattened embedding vector \(x = v_x \in \mathbb{R}^{N d_0}\), and the output is obtained by applying convolution and pooling operations across multiple kernel sizes. Specifically, the output is given by:
\[
y = \bigoplus_{k \in \mathcal{K}} \mathcal{P}^{(k)} \circ \mathcal{C}^{(k)}(v_x) \in \mathbb{R}^{|\mathcal{K}| \cdot d},
\]
where \(\mathcal{K} = \{k_1, \dots, k_m\}\) denotes the set of kernel sizes, and \(d\) is the number of output channels (filters) per kernel size.

For a given kernel size \(k_p \in \mathcal{K}\), the output of the corresponding convolution and max-pooling operation is a \(d\)-dimensional vector. The \(i\)-th component of this vector is given by:
\[
\mathcal{F}^i_p(x) = \max_{t = 1}^{N - k_p + 1} \phi\left(b_i^{(k_p)} + \sum_{l = 0}^{k_p - 1} W_{i,:,l}^{(k_p)} \cdot v_{w_{t + l}}\right),
\]
where \(W^{(k_p)} \in \mathbb{R}^{d \times d_0 \times k_p}\) is the convolutional kernel, \(b_i^{(k_p)}\) is the bias, and \(\phi\) is the activation function. After processing all kernel sizes, the final output vector is the concatenation of these vectors. Thus, each output index \(i \in \{1, \dots, |\mathcal{K}| \cdot d\}\) corresponds to a specific kernel \(k_{\alpha(i,1,d)}\) and a filter \(\beta(i,1,d)\) within that kernel, where \(\alpha\) and \(\beta\) are defined as:
\[
\alpha(i,a,d) = \left\lfloor \frac{i-a}{d} \right\rfloor + 1, \qquad
\beta(i,a,d) = 1 + ((i-a) \bmod d).
\]

Now consider two input indices \(j,k \in \{1,\dots,Nd_0\}\). Each input index corresponds to a word position and embedding dimension via the same functions: let \(p = \alpha(j,1,d_0)\), \(f = \beta(j,1,d_0)\) and similarly for \(k\). The second partial derivative \(\dfrac{\partial^2 \mathcal{F}^i}{\partial x^j \partial x^k}(x)\) is non-zero only if both \(j\) and \(k\) affect the same output through the convolution. In a neighborhood where the argmax of the max-pooling is constant (say at position \(t^*\)), the mapping simplifies to \(\mathcal{F}^i(x) = \phi(z_{i,t^*})\) with
\[
z_{i,t^*} = b_i^{(k_p)} + \sum_{l=0}^{k_p-1} W_{\beta(i,1,d),\,:,\,l}^{(k_p)} \cdot v_{w_{t^*+l}}.
\]
Since \(z_{i,t^*}\) is affine linear in the input, its second derivative vanishes. Applying the chain rule twice gives
\[
\frac{\partial^2 \mathcal{F}^i}{\partial x^j \partial x^k}(x) = \phi''(z_{i,t^*}) \cdot \frac{\partial z_{i,t^*}}{\partial x^j} \cdot \frac{\partial z_{i,t^*}}{\partial x^k}.
\]

The derivatives \(\dfrac{\partial z_{i,t^*}}{\partial x^j}\) are non-zero only if the word position \(p\) lies within the kernel window at time \(t^*\), i.e., \(t^* \le p \le t^*+k_p-1\). In that case,
\[
\left\lVert\frac{\partial z_{i,t^*}}{\partial x^j}\right\lVert = \left\lVert W_{\beta(i,1,d),\,f,\,t^*-p}^{(k_p)} \right\rVert.
\]
Over the entire input domain, the actual maximizing position \(t^*\) may vary. To obtain a bound valid for all inputs, we take all possible temporal positions that could be selected by the max-pooling. Hence,
\[
\left\lVert\frac{\partial z_{i,t^*}}{\partial x^j}\right\lVert \le \max_{t=1}^{N-k_p+1} \left\| W_{\beta(i,1,d),\,\beta(j,t,d_0),\,\alpha(j,t,d_0)}^{(k_p)} \right\|,
\]
and similarly for \(\dfrac{\partial z_{i,t^*}}{\partial x^k}\). \\
Combining these estimates and using \(\lVert\phi''(z)\lVert \le \sup_z \lVert\phi''(z)\lVert\), we obtain:
\[
\left\lVert\frac{\partial^2 \mathcal{F}^i}{\partial x^j \partial x^k}(x)\right\lVert
\le \left( \max_{t} \|W_{i,j,t}^{(k_p)}\| \right) \cdot \left( \max_{t} \|W_{i,k,t}^{(k_p)}\| \right) \cdot \sup_z \lVert\phi''(z)\lVert.
\]

Taking the supremum over all \(x \in \mathbf{X}\) yields the entry of the Smooth Growth Bound Tensor as stated in Eq.~\eqref{eq:sgbtcnn}, which concludes the proof.\hfill $\square$
\end{proof}
\end{document}